
\documentclass{article}

\usepackage{etoolbox}
\makeatletter
\patchcmd\@combinedblfloats{\box\@outputbox}{\unvbox\@outputbox}{}{%
}%
 \makeatother

\usepackage{microtype}
\usepackage{graphicx}
\usepackage{subfigure}
\usepackage{booktabs} 

\usepackage{hyperref}



\usepackage[accepted]{icml2021}

\icmltitlerunning{SGLB: Stochastic Gradient Langevin Boosting}

\usepackage{amsmath,amsfonts,bbold}

\DeclareMathOperator*{\argmin}{arg\,min}
\usepackage{amsthm}
\usepackage{balance}

\newtheorem{statement}{Statement}
\newtheorem{corollary}{Corollary}

\newtheorem{lemma}{Lemma}
\newtheorem{observation*}{Observation}
\newtheorem{theorem}{Theorem}

\usepackage{color}

\usepackage{wrapfig}

\begin{document}

\twocolumn[
\icmltitle{SGLB: Stochastic Gradient Langevin Boosting}



\icmlsetsymbol{equal}{*}

\begin{icmlauthorlist}
\icmlauthor{Aleksei Ustimenko}{yandex}
\icmlauthor{Liudmila Prokhorenkova}{yandex,mipt,hse}
\end{icmlauthorlist}

\icmlaffiliation{yandex}{Yandex, Moscow, Russia}
\icmlaffiliation{mipt}{Moscow Institute of Physics and Technology, Moscow, Russia}
\icmlaffiliation{hse}{HSE University, Moscow, Russia}

\icmlcorrespondingauthor{Aleksei Ustimenko}{austimenko@yandex-team.ru}

\icmlkeywords{Machine Learning, ICML}

\vskip 0.3in
]



\printAffiliationsAndNotice{}  

\begin{abstract}
This paper introduces Stochastic Gradient Langevin Boosting (SGLB)~--- a powerful and efficient machine learning framework that may deal with a wide range of loss functions and has provable generalization guarantees.
The method is based on a special form of the Langevin diffusion equation specifically designed for gradient boosting. 
This allows us to theoretically guarantee the \textit{global} convergence even for multimodal loss functions, while standard gradient boosting algorithms can guarantee only local optimum.
We also empirically show that SGLB outperforms classic gradient boosting when applied to classification tasks with 0-1 loss function, which is known to be multimodal.
\end{abstract}

\section{Introduction}

Gradient boosting is a powerful machine-learning method that iteratively combines weak models to obtain more accurate ones~\citep{friedman2001}. Nowadays, this technique remains the primary method for web search, recommendation systems, weather forecasting, and many other problems with complex dependencies and heterogeneous data.
Combined with decision trees, gradient boosting underlies such widely-used software libraries like XGBoost~\citep{Chen:2016}, LightGBM~\citep{LightGBM}, and CatBoost~\citep{catboost}.

For convex loss functions and under some regularity assumptions, stochastic gradient boosting (SGB) converges to the optimal solution~\cite{boulevard}. However, even local optima cannot be guaranteed for general losses. We fill this gap and build the first \textit{globally convergent} gradient boosting algorithm for convex and non-convex optimization with provable generalization guarantees. For this purpose, we combine gradient boosting with stochastic gradient Langevin dynamics (SGLD), which is a powerful iterative optimization algorithm~\citep{DBLP:journals/corr/RaginskyRT17}. It turns out that gradient boosting can be easily modified to a globally converging method: at each step, one has to shrink the currently built model and add a proper noise to stochastic gradient estimates and to the weak learners' selection algorithm.
To prove the global convergence and generalization, we develop a novel theoretical framework and show that the dynamics of SGLB can be approximated by a special form of the Langevin diffusion equation.

The proposed algorithm is implemented within the CatBoost open-source gradient boosting library (option \emph{langevin=True})~\cite{CatBoost_code}. Our experiments on synthetic and real datasets show that SGLB outperforms standard SGB and can optimize globally such non-convex losses as 0-1 loss, which was previously claimed to be a challenge~\citep{nguyen2013algorithms}. Since the first version of this paper appeared online, SGLB has found its new exciting applications in learning to rank and uncertainty estimation~\cite{ustimenko2020stochasticrank,malinin2021uncertainty}, see Section~\ref{sec:conclusion} for more details.

In the next section, we briefly discuss the related research on gradient boosting convergence and 0-1 loss optimization. Then, in Section~\ref{sec:background}, we give the necessary background on gradient boosting and stochastic gradient Langevin dynamics. The proposed SLGB method is described in Algorithm~\ref{alg:sglb} in Section~\ref{sec:algorithm}. This approach is easy to implement, and it is backed by strong theoretical guarantees. Our main results on the convergence of SGLB are given in Section~\ref{sec:results}: convergence on the train set is given in Theorem~\ref{thm:convergence}, the generalization error is bounded in Theorem~\ref{thm:generalization}, and our results are summarized in Corollary~\ref{corollary}. The experiments comparing SGLB with SGB on synthetic and real datasets are discussed in Section~\ref{sec:experiments}. Section~\ref{sec:conclusion} concludes the paper, discusses further applications of SGLB, and outlines promising directions for future research.

\section{Related Work}\label{sec:related}

In this section, we discuss related work on SGB convergence and 0-1 loss optimization. Our work is also closely related to stochastic gradient Langevin dynamics, which we discuss in Section~\ref{sec:SGLD}.

\paragraph{Convergence of gradient boosting}

There are several theoretical attempts to study SGB convergence, e.g., Boulevard~\citep{boulevard}, AnyBoost~\citep{anyboost}, or gradient boosting in general $L_2$ setting~\cite{doi:10.1198/016214503000125,zhang2005,2008arXiv0804.2752B,dombry2020behavior}. 
These works consider a general boosting algorithm, but under restrictive assumptions like exact greediness of the weak learners' selection algorithm~\citep{anyboost}, Structure Value Isolation properties~\citep{boulevard}, and, most importantly, convexity of the loss function. However, many practical tasks involve non-convex losses like 0-1 loss optimization~\citep{nguyen2013algorithms}, regret minimization in non-convex games \cite{DBLP:journals/corr/abs-1708-00075}, learning to rank~\cite{Liu:2009}, learning to select with order~\citep{vorobev2019learning}, and many others. Thus, existing frameworks fail to solve these tasks as they strongly rely on convexity. 

Many practical implementations of boosting like XGBoost~\citep{Chen:2016}, LightGBM~\citep{LightGBM}, and CatBoost~\citep{catboost} use constant learning rate in their default settings as in practice it outperforms dynamically decreasing ones. However, existing works on the convergence of boosting algorithms assume decreasing learning rates~\cite{zhang2005,boulevard}, thus leaving an open question: if we assume constant learning rate $\epsilon > 0$, can convergence be guaranteed? 

\paragraph{0-1 loss optimization}

For binary classification problems, convex loss functions are usually used since they can be efficiently optimized. However, as pointed out by~\citet{nguyen2013algorithms}, such losses are sensitive to outliers. On the other hand, 0-1 loss (the fraction of incorrectly predicted labels) is more robust and more interpretable but harder to optimize. \citet{nguyen2013algorithms} propose smoothing 0-1 loss with sigmoid function and show that an iteratively unrelaxed coordinate descent approach for gradient optimization of this smoothed loss outperforms optimization of convex upper bounds of the original 0-1 loss. In the current paper, we use a smoothed 0-1 loss as an example of a multimodal function and show that the SGLB algorithm achieves superior performance for this loss.

\section{Background}\label{sec:background}

\subsection{General Setup}\label{sec:setup}

Assume that we are given a distribution $\mathcal{D}$ on $\mathcal{X}\times \mathcal{Y}$, where $\mathcal{X}$ is a feature space (typically $\mathbb{R}^k$) and $\mathcal{Y}$ is a target space (typically $\mathbb{R}$ for regression or $\{0, 1\}$ for classification).\footnote{Table~\ref{tab:notation} in Appendix lists notation frequently used in the paper.}
We are also given a loss function $L(z, y):\mathcal{Z}\times\mathcal{Y}\rightarrow \mathbb{R}$, where $\mathcal{Z}$ is a space of predictions (typically $\mathbb{R}$ or $\{0,1\}$). Our goal is to minimize the expected loss $\mathcal{L}(f|\mathcal{D}) := \mathbb{E}_{\mathcal{D}} L(f(x), y)$  
over functions $f$ belonging to some family $\mathcal{F} \subset \{f:\mathcal{X}\rightarrow\mathcal{Z}\}$. In practice, the distribution $\mathcal{D}$ is unknown and we are given i.i.d.~samples $(x_1, y_1), \ldots, (x_N, y_N) \sim \mathcal{D}$ denoted as $\mathcal{D}_N$, so the expected loss is replaced by the empirical average:
\begin{equation}
\mathcal{L}_N(f) := \mathcal{L}(f|\mathcal{D}_N) = \frac{1}{N}\sum_{i=1}^N L(f(x_i), y_i)\,.
\end{equation}
Typically, a regularization term is added to improve generalization:
\begin{equation}\label{eq:reg_loss}
    \mathcal{L}_N(f,\gamma) := \mathcal{L}_N(f) + \frac{\gamma}{2} \|f\|_2^2\,.
\end{equation}

We consider only $\mathcal{F}$ corresponding to additive ensembles of weak learners $\mathcal{H} := \{h^s(x,\theta^s):\mathcal{X}\times \mathbb{R}^{m_s}\rightarrow \mathbb{R}, s\in S\}$, where $S$ is some finite index set and $h^s$ depends linearly on its parameters $\theta^s$.
Linearity in $\theta^s$ is crucial for convergence guarantees and is typical for practical implementations, where decision trees are used as base learners. 

A decision tree is a model built by a recursive partition of the feature space into disjoint regions called leaves.\footnote{Usually, a tree is constructed in a top-down greedy manner. We assume an arbitrary tree construction procedure satisfying mild assumptions formalized in Section~\ref{sec:results}.} Each leaf is assigned to a value that estimates the response $y$ in the corresponding region. Denoting these regions by $R_j$, we can write $h(x,\theta) = \sum_j \theta_j \mathbb{1}_{\{x \in R_j\}}$. Thus, if we know the tree structure given by $R_j$, a decision tree is a linear function of leaf values $\theta$. The finiteness of $S$ is a natural assumption: e.g., for decision trees, the tree depth and the number of splits for each feature are usually limited by fixed values, so there is a finite number of possible trees.
Further in the paper, we denote by $H_{s}: \mathbb{R}^{m_s} \to \mathbb{R}^N$ a linear operator converting $\theta^{s}$ to $(h^{s}(x_i,\theta^{s}))_{i=1}^N$.

Due to linear dependence of $h^s$ on $\theta^s$ and finiteness of $S$, we can represent any ensemble of models from $\mathcal{H}$ as a linear model $f_\Theta(x) = \langle\phi(x), \Theta\rangle_2$ for some feature map $\phi(x):\mathcal{X}\rightarrow \mathbb{R}^m$. Here $\Theta\in \mathbb{R}^m$ is a vector of parameters of the whole ensemble. To obtain this vector, for each $h^s \in \mathcal{H}$ we take all its occurrences in the ensemble, sum the corresponding vectors of parameters, and concatenate the vectors obtained for all $s \in S$.
If $\mathcal{H}$ consists of decision trees, we sum all trees with the same leaves $R_j$ and $\phi(x)$ maps the feature vector to the binary vector encoding all possible leaves containing $x$.
The parameters of the model $\widehat{F}_\tau$ obtained at iteration $\tau$ are denoted by $\widehat{\Theta}_{\tau}$.\footnote{We use notation $\widehat{\Theta}_{\tau}$, $\widehat{F}_{\tau}$ to stress that a process is discrete, while $F(t)$ is used for a continuous process.}

\subsection{Stochastic Gradient Boosting}\label{sec:sgb}

This section describes a classic stochastic gradient boosting (SGB) algorithm~\cite{friedman2002stochastic} using notation convenient for our analysis.
SGB is a recursive procedure that can be characterized by a tuple $\mathcal{B} := \big(\mathcal{H},p\big)$:
$\mathcal{H}$ is a set of weak learners; 
$p(s|g)$ is a probability distribution over weak learners' indices $s \in S$ given a vector $g \in \mathbb{R}^N$ that is typically picked as a vector of gradients. In case of decision trees, we refer to $p(s|g)$ as a distribution over tree structures.

SGB constructs an ensemble of decision tree iteratively. At each step, we compute unbiased gradient estimates $\widehat{g}_\tau \in \mathbb{R}^N$ such that $\mathbb{E}\widehat{g}_\tau = \big(\frac{\partial}{\partial f}L(f_{\widehat{\Theta}_\tau}(x_i), y_i)\big)_{i=1}^N$, where $f_{\widehat{\Theta}_\tau}$ is the currently built model.
Then, we pick a weak learner's index $s_\tau$ according to the distribution $p(s|\widehat{g}_\tau)$.
It remains to estimate the parameters $\theta^{s_\tau}_*$ of this weak learner, which is done as follows:
\begin{align}\label{eq:gb-wl-estimation}
    &\text{minimize \,\,\, }\|\theta^{s_\tau}\|_2^2\,\,\,\,\, \text{ subject to} \nonumber \\
    &\theta^{s_\tau}\in \argmin_{\theta\in\mathbb{R}^{m_{s_\tau}}} \| -\epsilon\widehat{g}_{\tau} - H_{s_\tau}\theta \|_2^2\,,
\end{align}
where $\epsilon > 0$ is a learning rate. In other words, if $\argmin$ returns not a single value but a set, then we choose the value that minimizes $\|\theta^{s_\tau}\|_2^2$.

\begin{algorithm}[t]
   \caption{SGB}
   \label{alg:sgb}
\begin{algorithmic}
   \STATE {\bfseries input:} dataset $\mathcal{D}_N$, learning rate $\epsilon > 0$
   \STATE initialize $\tau = 0$, $f_{\widehat{\Theta}_{0}}(\cdot) = 0$
   \REPEAT
   \STATE estimate gradients $\widehat{g}_\tau\in \mathbb{R}^N$ on $\mathcal{D}_N$ using $f_{\widehat{\Theta}_{\tau}}(\cdot)$
   \STATE sample tree structure $s_\tau \sim p(s|\widehat{g}_\tau)$
   \STATE estimate leaf values $\theta^{s_\tau}_* = -\epsilon \Phi_{s_\tau} \widehat{g}_\tau$
   \STATE update ensemble $f_{\widehat{\Theta}_{\tau+1}}(\cdot) = f_{\widehat{\Theta}_{\tau}}(\cdot) + h^{s_\tau}(\cdot,\theta^{s_\tau}_*)$
   \STATE update $\tau = \tau + 1$
   \UNTIL{stopping criteria met}
   \STATE {\bfseries return:} $f_{\widehat{\Theta}_\tau}(\cdot)$
\end{algorithmic}
\end{algorithm}

The above optimization problem can be solved exactly as $\theta^{s_\tau}_* = - \epsilon \Phi_{s_\tau} \widehat{g}_{\tau}$, where $\Phi_{s_\tau} = (H_{s_\tau}^T H_{s_\tau})^{\dagger} H_{s_\tau}^T$.\footnote{$A^{\dagger}$ denotes the pseudo-inverse of the matrix A, see Appendix~\ref{app:A} for more details.} 
This expression is general and holds for any weak learners linearly depending on parameters. However, for decision trees, the matrix $\Phi_{s_\tau}$ has a simple explicit formula. 
For a decision tree
$h^{s_\tau}(x, \theta^{s_\tau}) = \sum_i \theta_i^{s_\tau} \mathbb{1}_{\{x\in R_i\}}$ we have
$$
(\Phi_{s_\tau})_{i,j} = \frac{\mathbb{1}_{\{x_j\in R_i\}}}{|\{k: x_k \in R_i\}|}.
$$
Thus, $\Phi_{s_\tau} \widehat{g}_{\tau}$ corresponds to averaging the gradient estimates in each leaf of the tree.


After obtaining $\theta^{s_\tau}_*$, the algorithm updates the ensemble as 
\begin{equation}\label{eq:update}
f_{\widehat{\Theta}_{\tau+1}}(\cdot) := f_{\widehat{\Theta}_{\tau}}(\cdot) + h^{s_\tau}(\cdot,\theta^{s_\tau}_*)\,.
\end{equation}

Under the convexity of $L(z, y)$ by $z$ and some regularity assumptions on the triplet $\mathcal{B}$~\cite{boulevard}, the ensemble converges to the optimal one with respect to the closure of the set of all possible finite ensembles. Thus, one can construct a converging SGB algorithm for convex losses.
For non-convex losses, SGB cannot guarantee convergence for the same reasons as stochastic gradient descent (SGD)~--- it gives only a first-order stationarity guarantee \cite{first-order-non-convex} that means not only local minima points but also saddles, which prevents effective optimization.
Our paper fills this gap: we build a globally converging gradient boosting algorithm for convex and non-convex optimization with provable (under some assumptions on $\mathcal{H}$) generalization gap bounds. 

\subsection{Stochastic Gradient Langevin Dynamics}\label{sec:SGLD}

Our algorithm combines SGB with stochastic gradient Langevin dynamics (SGLD), which we briefly introduce here.
Assume that we are given a Lipschitz smooth function $U(\theta):\mathbb{R}^m \rightarrow \mathbb{R}$ such that $U(\theta) \rightarrow \infty$ as $\|\theta\|_2\rightarrow\infty$.
Importantly, the function $U(\theta)$ can be non-convex.
SGLD~\cite{GelfandGAA, Welling2011BayesianLV,DBLP:journals/corr/RaginskyRT17,Erdogdu:2018:GNO:3327546.3327636} aims at finding the global minimum of $U(\theta)$ which is in sharp contrast with SGD that can stuck in local minima point. It performs an iterative procedure and updates $\widehat{\theta}_\tau$ as:
\begin{equation}\label{eq:SGLD_update}
    \widehat{\theta}_{\tau + 1} = \widehat{\theta}_\tau - \epsilon P\widehat{\nabla}U(\widehat{\theta}_\tau) + \mathcal{N}(\mathbb{0}_m, 2\epsilon\beta^{-1} P),
\end{equation}
where $\widehat{\nabla}U(\theta)$ is an unbiased stochastic gradient estimate (i.e., $\mathbb{E}\widehat{\nabla}U(\theta) = \nabla U(\theta)$), $\epsilon > 0$ is a learning rate, $\beta > 0$ is an inverse diffusion temperature, and $P \in \mathbb{R}^{m\times m}$ is a symmetric positive definite preconditioner matrix~\citep{Welling2011BayesianLV}.
Informally, one just injects the $\sqrt{\epsilon}$ rescaled Gaussian noise into a standard SGD update to force the algorithm to explore the whole space and eventually find the global optimum. The diffusion temperature controls the level of exploration.\footnote{For the proposed approach, $\beta$ also affects the generalization. In our experiments, we tune this parameter on the validation set.} The preconditioner matrix $P$ affects the convergence rate of $\widehat{\theta}_{\tau}$ while still giving the convergence to the global optimum~\citep{Welling2011BayesianLV}. We write~\eqref{eq:SGLD_update} in such a general form since in our analysis of SGLB we obtain a particular non-trivial preconditioner matrix.

Under mild assumptions\footnote{Typical assumptions are Lipschitz smoothness and convexity outside a large enough ball, often referred to as dissipativity.} \cite{DBLP:journals/corr/RaginskyRT17,Erdogdu:2018:GNO:3327546.3327636}, the chain $\widehat{\theta}_\tau$ converges in distribution to a random variable with density $p_{\beta}(\theta) \propto \exp(-\beta U(\theta))$ (Gibbs distribution) as $\epsilon\tau \rightarrow \infty$, $\epsilon \rightarrow 0_+$. 
Moreover, according to \citet{DBLP:journals/corr/RaginskyRT17,Erdogdu:2018:GNO:3327546.3327636}, we have \[
\mathbb{E}_{\theta\sim p_\beta(\theta)}U(\theta) - \min_{\theta\in\mathbb{R}^m} U(\theta) = \mathcal{O}\left(\frac{m}{\beta}\log\frac{\beta}{m}\right)\,,
\]
so the distribution $p_\beta(\theta)$ concentrates around the global optimum of $U(\theta)$ for large~$\beta$. 

Following~\citet{DBLP:journals/corr/RaginskyRT17}, we refer to such random variables as \emph{almost minimizers} of the function, meaning that by choosing the parameters of the algorithm, we can obtain $\varepsilon$-minimizers with probability arbitrary close to one for any fixed $\varepsilon > 0$. 

The main idea behind the proof of convergence is to show that the continuous process $\theta_\epsilon(t) := \widehat{\theta}_{[\epsilon^{-1}t]}$ weakly converges to the solution of the following \textit{associated} Langevin dynamics stochastic differential equation (SDE) as $\epsilon \rightarrow 0_+$:
\begin{equation}
    \mathrm{d}\theta(t) = -P\nabla U(\theta(t))\mathrm{d}t + \sqrt{2\beta^{-1}P}\mathrm{d}W(t),
\end{equation}
where $W(t)$ is a standard Wiener process. The Gibbs measure $p_\beta(\theta)$ is an invariant measure of the SDE, $\theta(t)$ is a solution of the SDE typically defined by the Ito integral~\cite{chung1990introduction}.
A key property of the solution $\theta(t)$ is that its distribution converges to the invariant measure $p_\beta(\theta)$ in a suitable distance (see \citet{DBLP:journals/corr/RaginskyRT17} for details).
We use a similar technique to prove the convergence of the proposed SGLB algorithm.

\section{SGLB Algorithm}\label{sec:algorithm}

\begin{algorithm}[t]
\caption{SGLB}
\label{alg:sglb}
\begin{algorithmic}
\STATE {\bfseries input:} dataset $\mathcal{D}_N$, learning rate $\epsilon > 0$, inverse temperature $\beta > 0$, regularization~$\gamma > 0$ 
\STATE initialize $\tau = 0$, $f_{\widehat{\Theta}_{0}}(\cdot) = 0$
\REPEAT
\STATE {estimate} gradients $\widehat{g}_\tau\in \mathbb{R}^N$ on $\mathcal{D}_N$ using $f_{\widehat{\Theta}_{\tau}}(\cdot)$ \\
\STATE {sample} $\zeta, \zeta' \sim \mathcal{N}(\mathbb{0}_N, I_N)$\\ 
\STATE {sample} tree structure $s_\tau \sim p(s|\widehat{g}_\tau+\sqrt{\frac{2N}{\epsilon\beta}}\zeta')$\\
\STATE {estimate} leaf values $\theta^{s_\tau}_* = -\epsilon \Phi_{s_\tau} ( \widehat{g}_\tau+\sqrt{\frac{2N}{\epsilon\beta}}\zeta)$\\
\STATE {update} ensemble $f_{\widehat{\Theta}_{\tau + 1}}(\cdot) $\\ \hfill $ = (1 - \gamma \epsilon)f_{\widehat{\Theta}_{\tau}}(\cdot) + h^{s_\tau}(\cdot,\theta^{s_\tau}_*)$\\
\STATE {update} $\tau = \tau + 1$\\
\UNTIL{stopping criteria met}
\STATE {\bfseries return:} $f_{\widehat{\Theta}_\tau}(\cdot)$
\end{algorithmic}
\end{algorithm}

In this section, we describe the proposed SGLB algorithm that combines SGB with Langevin dynamics.
Surprisingly, a few simple modifications allow us to convert SGB (Algorithm~\ref{alg:sgb}) to a globally converging method (Algorithm~\ref{alg:sglb}). The obtained approach is easy to implement, but proving the convergence is non-trivial (see Sections~\ref{sec:results} and~\ref{sec:proofs}).

We further assume that the loss $L(z, y)$ is Lipschitz continuous and Lipschitz smooth by the variable $z$ and that $\inf_z L(z, y) > -\infty$ for any $y$. Since $\mathcal{L}_N({F})$ is a sum of several $L(\cdot, \cdot)$, it necessarily inherits all these properties.\footnote{We use $\mathcal{L}_N({F})$ and $\mathcal{L}_N({f})$ interchangeably.}

First, we add $L_2$ regularization~\eqref{eq:reg_loss} to the loss for two reasons: 
1) regularization is known to improve generalization,
2) we \textit{do not} assume that $\mathcal{L}_N(F) \rightarrow \infty$ as $\|F\|_2 \rightarrow \infty$, but we need to ensure this property for Langevin dynamics to guarantee the existence of the invariant measure. Lipschitz continuity of the loss implies at most linear growth at infinity, and $L_2$ regularizer~\eqref{eq:reg_loss} grows faster than linearly. Thus, we get $\mathcal{L}_N(F,\gamma) \rightarrow \infty$ as $\|F\|_2 \rightarrow \infty$.

According to the SGB procedure (Section~\ref{sec:sgb}), a new weak learner should fit 
\[-\epsilon\widehat{\nabla}_F \mathcal{L}_N(\widehat{F}_\tau, \gamma) = 
-\epsilon\widehat{\nabla}_F
\left(\mathcal{L}_N(\widehat{F}_\tau) + \frac{\gamma}{2} \|\widehat{F}_\tau\|_2^2\right)\,.
\] 
Note that $-\epsilon\nabla_F\big(\frac{\gamma}{2}\|\widehat{F}_\tau\|_2^2\big) = -\epsilon\gamma \widehat{F}_\tau$, thus we can make the exact step for $\frac{\gamma}{2}\|F\|_2^2$ by shrinking the predictions as $(1 - \gamma \epsilon )\widehat{F}_\tau$. Due to the linearity of the model, it is equivalent to the shrinkage of the parameters: $(1 - \gamma\epsilon)\widehat{\Theta}_\tau$.

Second, we inject Gaussian noise directly into the SGB gradients estimation procedure. Namely, instead of approximating $\widehat{g}_\tau$ by a weak learner~\eqref{eq:gb-wl-estimation}, we approximate $\widehat{g}_\tau + \mathcal{N}(\mathbb{0}_{N}, \frac{2N}{\epsilon\beta} I_{N})$:
\begin{align}
    &\text{minimize \,\,\, }\|\theta^{s_\tau}\|_2^2\,\,\,\,\, \text{ subject to} \nonumber \\
    &\theta^{s_\tau}\in \argmin_{\theta\in\mathbb{R}^{m_{s_\tau}}} \| -\epsilon\left(\widehat{g}_{\tau} + \sqrt{\frac{2 N}{\epsilon\beta}}\zeta\right) - H_{s_\tau}\theta \|_2^2\,,
\end{align}
where $\zeta \sim \mathcal{N}(\mathbb{0}_N, I_N)$. 
Adding Gaussian noise allows us to show that after a proper time interpolation, the process weakly converges to the Langevin dynamics, which in turn leads to the global convergence. 

Finally, we also add noise to the weak learners' selection algorithm. This is done to make the distribution of the trees independent from the gradient noise. Formally, instead of a given distribution $p(s|g)$, we consider the distribution $p_*(s|g) = \int_z p(s|z)\mathcal{N}(z|g, \frac{2N}{\epsilon\beta})\mathrm{d} z$ which is a convolution of $p(s|z)$ with the Gaussian distribution.\footnote{$\mathcal{N}(z|a, \sigma)$ denotes the density of $\mathcal{N}(a, \sigma)$.} We can easily sample from this distribution by first sampling $z \sim \mathcal{N}(z|g, \frac{2N}{\epsilon\beta})$ and then sampling $s \sim p(s|z)$.

The overall SGLB procedure is described in Algorithm~\ref{alg:sglb}.

\section{Convergence of SGLB}\label{sec:results}

In this section, we formulate our main theoretical results. For this, we need some notation. Let
$$
V_{\mathcal{B}} := \Big\{F=\big(f_\Theta(x_i)\big)_{i=1}^N\Big|\forall \text{ ensembles }\Theta\Big\}\subset \mathbb{R}^N\,.
$$
This set encodes all possible predictions of all possible ensembles. 
It is easy to see that $V_{\mathcal{B}}$ is a linear subspace in $\mathbb{R}^N$: the sum of any ensembles is again an ensemble. 

Let us also define
$P_{s_\tau} := H_{s_\tau} \Phi_{s_\tau}$. 
For given unbiased estimates of the gradients $\widehat{g}_\tau \in \mathbb{R}^{N}$ and a weak learner's index $s_\tau \in S$, such operation first estimates $\theta^{s_\tau}_*$ according to~\eqref{eq:gb-wl-estimation} and then converts them to the predictions of the weak learner. Clearly, we have $P_{s_\tau} v \in V_\mathcal{B}$ for any $v\in \mathbb{R}^N$. The following lemma characterizes the structure of $P_{s_\tau}$ and is proven in Appendix~\ref{app:A}.
\begin{lemma}
\label{lemma=projector}
$P_{s_\tau}$ is an orthoprojector on the image of the weak learner $h^{s_\tau}$, i.e., $P_{s_\tau}$ is symmetric, $P_{s_\tau}^2 = P_{s_\tau}$, and $\mathrm{im }P_{s_\tau} = \mathrm{im }H_{s_\tau}$.
\end{lemma}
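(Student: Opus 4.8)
The plan is to strip the notation down to $H := H_{s_\tau}$, $\Phi := (H^T H)^{\dagger} H^T$, and $P := H\Phi = H(H^T H)^{\dagger} H^T$, and then to verify the three claimed properties by reducing $P$ to the canonical form of an orthoprojector. The cleanest route is through the compact singular value decomposition $H = U\Sigma V^T$, where $r = \mathrm{rank}\, H$, the columns of $U \in \mathbb{R}^{N\times r}$ and of $V \in \mathbb{R}^{m_{s_\tau}\times r}$ are orthonormal (so that $U^T U = V^T V = I_r$), and $\Sigma \in \mathbb{R}^{r\times r}$ is diagonal with strictly positive entries, hence invertible. Once $H$ is in this form, everything collapses to routine algebra.

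First I would compute $H^T H = V\Sigma^2 V^T$ and observe that its pseudo-inverse is $(H^T H)^{\dagger} = V\Sigma^{-2}V^T$ (one checks the four defining Moore--Penrose identities directly, using $V^T V = I_r$). Substituting gives
\begin{equation*}
\Phi = (H^T H)^{\dagger} H^T = V\Sigma^{-2}V^T V\Sigma U^T = V\Sigma^{-1}U^T = H^{\dagger},
\end{equation*}
and therefore
\begin{equation*}
P = H\Phi = U\Sigma V^T V \Sigma^{-1} U^T = U U^T.
\end{equation*}
From the single identity $P = UU^T$ with $U^T U = I_r$, all three assertions are immediate: $P^T = UU^T = P$ gives symmetry; $P^2 = U(U^T U)U^T = UU^T = P$ gives idempotency; and $\mathrm{im}\, P = \mathrm{im}\, U = \mathrm{im}\, H$ follows since the left singular vectors span the column space of $H$. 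This simultaneously identifies $\Phi$ as the pseudo-inverse $H^{\dagger}$, which makes the leaf-averaging interpretation of $\Phi_{s_\tau}$ from Section~\ref{sec:sgb} transparent.

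I expect the only genuine subtlety to be the rank-deficient case. For decision trees, an empty leaf contributes a zero column to $H_{s_\tau}$, so $H_{s_\tau}^T H_{s_\tau}$ is in general singular and one cannot replace $(H^T H)^{\dagger}$ by an ordinary inverse; the classical formula $H(H^T H)^{-1}H^T$ would be ill-defined. The SVD handles this automatically by working only on the $r$ nonzero singular directions, which is exactly why $\Sigma$ stays invertible. An alternative, SVD-free proof would instead invoke the four Moore--Penrose axioms together with the standard identity $(H^T H)^{\dagger} H^T = H^{\dagger}$ and conclude that $HH^{\dagger}$ is the canonical orthoprojector onto $\mathrm{im}\, H$; there the entire care lies in justifying these pseudo-inverse identities without assuming full column rank.
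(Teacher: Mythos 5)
Your proof is correct, and it takes a genuinely different route from the paper's. The paper avoids the SVD entirely: it first derives the closed form $\Phi_{s_\tau} = (H_{s_\tau}^T H_{s_\tau})^{\dagger} H_{s_\tau}^T$ from the variational problem~\eqref{eq:gb-wl-estimation} by viewing the minimum-norm least-squares solution as a Tikhonov limit, $\Phi_{s_\tau} = \lim_{\delta\rightarrow 0}(H_{s_\tau}^T H_{s_\tau} + \delta^2 I)^{-1} H_{s_\tau}^T$, and then reads the three properties off this representation: symmetry holds because each matrix $H_{s_\tau}(H_{s_\tau}^T H_{s_\tau} + \delta^2 I)^{-1} H_{s_\tau}^T$ is symmetric and symmetry survives the limit; $\mathrm{im}\, P_{s_\tau} = \mathrm{im}\, H_{s_\tau}$ holds because any $v = H_{s_\tau}\theta$ is fitted exactly by the least-squares subproblem, so $P_{s_\tau} v = v$; and idempotency follows since $P_{s_\tau} v \in \mathrm{im}\, H_{s_\tau}$ for every $v$. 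Your compact-SVD computation instead collapses everything to the single identity $P = UU^T$, which buys a fully self-contained algebraic argument in which rank-deficiency (empty leaves) is handled transparently and the identification $\Phi_{s_\tau} = H_{s_\tau}^{\dagger}$ comes for free. What the paper's route buys, and what your write-up takes as given, is the bridge between the formula $(H_{s_\tau}^T H_{s_\tau})^{\dagger} H_{s_\tau}^T$ and the \emph{algorithmic} definition of $\Phi_{s_\tau}$ as the solution operator of problem~\eqref{eq:gb-wl-estimation}: the ``exact fit on the image'' argument is precisely what ties the projector to the boosting step. If your version were to replace the paper's proof, you would want one extra sentence noting that $H^{\dagger} g$ is the minimum-norm least-squares solution, so that your $\Phi$ is indeed the operator the algorithm computes; with the SVD in hand this is a one-line check.
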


We are ready to formulate the required properties for the probability distribution $p(s|g)$ that determines the choice of the weak learners:
\begin{itemize}
    \item \textbf{Non-degeneracy}: $\forall s\in S \,\,\exists\, g\in\mathbb{R}^N$ such that for some small $\delta > 0$ we have $p(s|g') > 0$ almost surely for all $g'$ s.t. $\|g' - g\|_2 < \delta$. 
    \item \textbf{Zero-order Positive Homogeneity}: $\forall \lambda > 0\,\,\forall g \in \mathbb{R}^{N}$ we have $p(s|\lambda g) \equiv p(s|g) \,\, \forall s\in S$. 
\end{itemize} 
Informally, the first property requires that for each tree structure (denoted by $s$), we have a non-zero probability, i.e., by feeding different $g \in \mathbb{R}^N$ to $p(s|g)$ and then sampling $s$, we would eventually sample every possible structure. The second property means that the weak learner's selection does not depend on the scale of $g$. This property is usually satisfied by standard gradient boosting algorithms: typically, one sets $p(s|g) = 1$ for the weak learner achieving the lowest MSE error of predicting the vector $g \in \mathbb{R}^N$). Clearly, if we rescale $g$, then the best fit weak learner remains the same as it is a linear function of its parameters.

Recall that we also assume the loss $L(z, y)$ to be Lipschitz continuous and Lipschitz smooth by the variable $z$, and $\inf_z L(z, y) > -\infty\,\forall y$.
Also, from the stochastic gradient estimates $\widehat{g}_\tau$ we require $\|\widehat{g}_\tau - \mathbb{E}\widehat{g}_\tau\|_2 = \mathcal{O}(1)$ with probability one.
Now we can state our main theorem.

\begin{theorem}\label{thm:SDE}
The Markov chain $\widehat{F}_\tau$ generated by SGLB (Algorithm~\ref{alg:sglb}) weakly converges to the solution of the following SDE as $\epsilon \rightarrow 0_+$:
\begin{multline}\label{eq:sde}
    \mathrm{d}  F(t) = - \gamma F(t)\mathrm{d}t - P_{\infty}\nabla_{F}\mathcal{L}_N(F(t))\mathrm{d}t  \\ + \sqrt{2\beta^{-1}P_{\infty}}\mathrm{d}W(t).
\end{multline}
where $P_{\infty} := N\mathbb{E}_{s\sim p_*(s|\mathbb{0}_N)}P_{s}$.
\end{theorem}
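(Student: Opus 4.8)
The plan is to follow the diffusion-approximation strategy outlined for SGLD in Section~\ref{sec:SGLD}: rewrite the SGLB recursion as an increment of the prediction vector $\widehat F_\tau = (f_{\widehat\Theta_\tau}(x_i))_{i=1}^N \in \R^N$, interpolate it in time as $F_\epsilon(t) := \widehat F_{[\epsilon^{-1}t]}$, and show that the infinitesimal mean and covariance of the increments converge to the drift and diffusion coefficients of~\eqref{eq:sde}, together with negligibility of higher moments and well-posedness of the limiting SDE. Using $P_{s_\tau} = H_{s_\tau}\Phi_{s_\tau}$ and the ensemble update, the one-step increment reads
\begin{equation}
\Delta_\tau := \widehat F_{\tau+1} - \widehat F_\tau = -\gamma\epsilon\,\widehat F_\tau - \epsilon P_{s_\tau}\widehat g_\tau - \sqrt{\tfrac{2\epsilon N}{\beta}}\,P_{s_\tau}\zeta,
\end{equation}
where, by Lemma~\ref{lemma=projector}, each $P_{s_\tau}$ is a symmetric orthoprojector. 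Note that the explicit noise enters at order $\sqrt\epsilon$, which is exactly the scaling that produces a nondegenerate diffusion limit.

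The key step, and the one I expect to be the main obstacle, is a \emph{decoupling lemma} showing that as $\epsilon\to 0_+$ the selected structure $s_\tau$ becomes asymptotically independent of the current gradient and that the averaged projector converges to a fixed operator. Writing $c := \sqrt{2N/(\epsilon\beta)}$ and substituting $z = g + cw$, the smoothed selection law satisfies
\begin{equation}
\textstyle\sum_s p_*(s|g)P_s = \int \big(\sum_s p(s|g+cw)P_s\big)\,\mathcal N(w|\mathbb{0}_N,I_N)\,\mathrm dw .
\end{equation}
By Zero-order Positive Homogeneity, $p(s|g+cw)=p(s|c^{-1}g+w)$, and since the loss is Lipschitz the gradient $\nabla_F\mathcal L_N$ is bounded, so together with $\|\widehat g_\tau-\mathbb E\widehat g_\tau\|_2=\mathcal O(1)$ the argument $\widehat g_\tau$ is uniformly bounded almost surely. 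As $\epsilon\to 0_+$ we have $c\to\infty$, hence $c^{-1}\widehat g_\tau\to 0$; continuity of the convolution with the Gaussian kernel then gives $\sum_s p_*(s|\widehat g_\tau)P_s \to \sum_s p_*(s|\mathbb{0}_N)P_s = N^{-1}P_\infty$, uniformly over the (bounded) values of $\widehat g_\tau$. Homogeneity is precisely what makes this limit independent of $\epsilon$ and equal to $P_\infty$, while Non-degeneracy guarantees that the average genuinely mixes all admissible structures.

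With the decoupling lemma in hand, the infinitesimal drift follows by iterated conditioning: averaging over the structure noise $\zeta'$ given $\widehat g_\tau$ replaces $P_{s_\tau}$ by $\sum_s p_*(s|\widehat g_\tau)P_s$, so that $\mathbb E[P_{s_\tau}\widehat g_\tau\mid\widehat F_\tau] = \mathbb E_{\widehat g_\tau}[(\sum_s p_*(s|\widehat g_\tau)P_s)\widehat g_\tau]\to N^{-1}P_\infty\,\mathbb E\widehat g_\tau = P_\infty\nabla_F\mathcal L_N(\widehat F_\tau)$, using $\mathbb E\widehat g_\tau = N\nabla_F\mathcal L_N(\widehat F_\tau)$; dividing by $\epsilon$ yields the drift $-\gamma \widehat F_\tau - P_\infty\nabla_F\mathcal L_N(\widehat F_\tau)$ of~\eqref{eq:sde}. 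For the diffusion, the leading contribution is the order-$\sqrt\epsilon$ term: since $\zeta\sim\mathcal N(\mathbb{0}_N,I_N)$ is independent of $s_\tau$ (which is selected using the separate noise $\zeta'$) and $P_{s_\tau}^2=P_{s_\tau}$, the cross terms vanish by $\mathbb E\zeta=\mathbb{0}_N$ and
\begin{equation}
\mathbb E[\Delta_\tau\Delta_\tau^{\!\top}\mid\widehat F_\tau] = \tfrac{2\epsilon N}{\beta}\,\mathbb E_{s_\tau}[P_{s_\tau}] + \mathcal O(\epsilon^2) \longrightarrow \tfrac{2\epsilon}{\beta}P_\infty,
\end{equation}
so $\epsilon^{-1}\mathbb E[\Delta_\tau\Delta_\tau^{\!\top}\mid\widehat F_\tau]\to 2\beta^{-1}P_\infty$, matching the diffusion matrix of~\eqref{eq:sde}; the gradient-noise fluctuation and the $s_\tau$-randomness of the drift terms contribute only $\mathcal O(\epsilon^2)$ because $\|\widehat g_\tau-\mathbb E\widehat g_\tau\|_2=\mathcal O(1)$.

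Finally, Gaussianity of $\zeta$ and the uniform bounds give $\epsilon^{-1}\mathbb E[\|\Delta_\tau\|_2^4\mid\widehat F_\tau]=\mathcal O(\epsilon)\to 0$, a Lindeberg-type condition yielding tightness of $\{F_\epsilon\}$ and negligibility of higher-order terms. The limiting SDE~\eqref{eq:sde} has a constant diffusion coefficient $\sqrt{2\beta^{-1}P_\infty}$, well defined since $P_\infty$ is symmetric positive semidefinite as an average of orthoprojectors, and a globally Lipschitz drift, since $\nabla_F\mathcal L_N$ is Lipschitz by Lipschitz smoothness of $L$; by standard It\^o theory it therefore admits a unique weak solution and the associated martingale problem is well posed. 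Invoking a standard diffusion-approximation theorem for Markov chains (the martingale-problem formulation of Stroock--Varadhan / Ethier--Kurtz), convergence of the infinitesimal generator together with tightness and uniqueness implies that $F_\epsilon$ converges weakly to the solution of~\eqref{eq:sde}, as claimed.
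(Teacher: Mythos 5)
Your proposal is correct and follows essentially the same route as the paper's proof: the same one-step increment decomposition, the same key decoupling step (Zero-order Positive Homogeneity plus Lipschitz continuity of the Gaussian-smoothed selection law, i.e.\ Lemma~\ref{lemma:lipshitz-p}, giving $p_*(s|\widehat g_\tau)\to p_*(s|\mathbb{0}_N)$ uniformly over the a.s.\ bounded gradients), and the same infinitesimal drift/covariance computation yielding $-\gamma F - P_\infty\nabla_F\mathcal{L}_N(F)$ and $2\beta^{-1}P_\infty$. The only difference is which standard diffusion-approximation theorem closes the argument: the paper cites Gikhman--Skorokhod (extended to arbitrary dimension via Kushner), whereas you invoke the Stroock--Varadhan/Ethier--Kurtz martingale-problem formulation with an explicit fourth-moment tightness check~--- the same machinery under a different name.
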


Based on this theorem, we prove that in the limit, $\mathcal{L}_N(\widehat{F}_\tau)$ concentrates around the global optimum of~$\mathcal{L}_N({F})$.

\begin{theorem}\label{thm:convergence}
The following bound holds almost surely: $$\lim\mathbb{E}\mathcal{L}_N(\widehat{F}_\tau) -  \inf_{{F}\in V_{\mathcal{B}}}\mathcal{L}_N({F}) = \mathcal{O}\Big(\delta_\Gamma(\gamma) + \frac{d}{\beta}\log\frac{\beta}{d}\Big)$$
for $\epsilon\rightarrow 0_+, \epsilon\tau\rightarrow \infty$, where $d = \dim V_{\mathcal{B}}$ and $\delta_{\Gamma}(\gamma)$ encodes the error from the regularization and is of order $o(1)$ as $\gamma \rightarrow 0_+$.
\end{theorem}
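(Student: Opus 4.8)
The plan is to run the standard Langevin-to-Gibbs argument on the limiting SDE from Theorem~\ref{thm:SDE}, but carefully on the subspace $V_{\mathcal{B}}$ where the diffusion is non-degenerate. First I would identify the invariant measure of~\eqref{eq:sde}. Since the process starts at $\widehat F_0 = 0 \in V_{\mathcal B}$ and the noise $\sqrt{2\beta^{-1}P_\infty}\,\mathrm dW$ acts only in $\mathrm{im}\,P_\infty = V_{\mathcal B}$, the solution stays in $V_{\mathcal B}$, on which $P_\infty$ is symmetric positive definite. Rewriting the drift as $-\gamma F - P_\infty\nabla_F\mathcal L_N(F) = -P_\infty\nabla_F U(F)$ with
\[
U(F) := \mathcal L_N(F) + \tfrac{\gamma}{2}\langle F, P_\infty^\dagger F\rangle,
\]
a direct Fokker--Planck computation shows that the Gibbs measure $p_\beta(F)\propto \exp(-\beta U(F))$ on $V_{\mathcal B}$ is stationary for~\eqref{eq:sde} (the constant preconditioner cancels exactly against the noise covariance, using $P_\infty P_\infty^\dagger = \mathrm{Id}$ on $V_{\mathcal B}$). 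The $L_2$ regularization guarantees $U(F)\to\infty$ as $\|F\|_2\to\infty$ inside $V_{\mathcal B}$, so $p_\beta$ is a well-defined probability measure and, together with Lipschitz smoothness of $\mathcal L_N$, yields the dissipativity needed below.

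Second, I would invoke the quantitative ergodicity results for Langevin diffusions~\citep{DBLP:journals/corr/RaginskyRT17,Erdogdu:2018:GNO:3327546.3327636}: under dissipativity and Lipschitz smoothness, the law of $F(t)$ converges to $p_\beta$ as $t\to\infty$, and the classical concentration bound gives
\[
\mathbb{E}_{F\sim p_\beta}U(F) - \min_{F\in V_{\mathcal B}} U(F) = \mathcal O\!\left(\tfrac{d}{\beta}\log\tfrac{\beta}{d}\right),\qquad d = \dim V_{\mathcal B}.
\]
Crucially, the effective dimension is $d$ rather than the ambient $N$ precisely because the diffusion lives on $V_{\mathcal B}$; this is where restricting to $V_{\mathcal B}$ pays off. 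Combining this with the weak convergence of $\widehat F_\tau$ to $F(t)$ from Theorem~\ref{thm:SDE} identifies $\lim \mathbb{E}\mathcal L_N(\widehat F_\tau)$ (as $\epsilon\to 0_+$, $\epsilon\tau\to\infty$) with $\mathbb{E}_{F\sim p_\beta}\mathcal L_N(F)$.

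Third, I would peel off the regularization. Writing $\mathcal L_N = U - \tfrac{\gamma}{2}\langle F, P_\infty^\dagger F\rangle$ and splitting,
\[
\mathbb{E}_{p_\beta}\mathcal L_N - \inf_{V_{\mathcal B}}\mathcal L_N = \big(\mathbb{E}_{p_\beta}U - \min_{V_{\mathcal B}} U\big) + \big(\min_{V_{\mathcal B}} U - \inf_{V_{\mathcal B}}\mathcal L_N\big) - \tfrac{\gamma}{2}\mathbb{E}_{p_\beta}\langle F, P_\infty^\dagger F\rangle.
\]
The first term is the $\mathcal O(\tfrac{d}{\beta}\log\tfrac{\beta}{d})$ bound above. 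The remaining two terms are collected into $\delta_\Gamma(\gamma)$: since $\min_{V_{\mathcal B}}U\to\inf_{V_{\mathcal B}}\mathcal L_N$ as the regularizer vanishes, and $\mathbb{E}_{p_\beta}\langle F, P_\infty^\dagger F\rangle$ is bounded uniformly (via the dissipativity-based second-moment estimate for $p_\beta$), both are $o(1)$ as $\gamma\to 0_+$, giving the claimed $\mathcal O(\delta_\Gamma(\gamma))$ contribution.

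I expect the main obstacle to be rigorously combining the two limits $\epsilon\to 0_+$ and $\epsilon\tau\to\infty$: Theorem~\ref{thm:SDE} provides weak convergence of the interpolated chain on finite time horizons, whereas the concentration statement concerns the $t\to\infty$ stationary regime, so one needs uniform-in-time control of the discretization error (or a careful two-step exchange of $\lim_{\epsilon\to 0_+}$ and $\lim_{t\to\infty}$) to transfer the Gibbs concentration to $\widehat F_\tau$. A secondary technical point is handling the non-standard, degenerate preconditioner $P_\infty$ through its pseudo-inverse $P_\infty^\dagger$ consistently on $V_{\mathcal B}$, both in the Fokker--Planck verification and in guaranteeing that the concentration bound scales with $d$ rather than $N$.
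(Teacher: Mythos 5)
Your proposal is correct and follows essentially the same route as the paper's proof: restrict to $V_{\mathcal{B}}$ where $P_\infty$ is symmetric positive definite, absorb the shrinkage term $-\gamma F$ into an implicitly regularized loss (your $U(F) = \mathcal{L}_N(F) + \frac{\gamma}{2}\langle F, P_\infty^\dagger F\rangle$ is exactly the paper's $\mathcal{L}_N(F,\gamma)$ with $\Gamma = \sqrt{P^{-1}}$), invoke the results of \citet{DBLP:journals/corr/RaginskyRT17,Erdogdu:2018:GNO:3327546.3327636} to obtain the $\mathcal{O}\bigl(\frac{d}{\beta}\log\frac{\beta}{d}\bigr)$ almost-minimization term, and collect the regularization error into $\delta_\Gamma(\gamma)$. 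The limit-interchange issue you flag ($\epsilon \rightarrow 0_+$ versus $\epsilon\tau \rightarrow \infty$) is likewise left implicit in the paper, which simply asserts that the cited results apply to the chain.
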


Note that this theorem is stronger than the convergence in probability.
Indeed, the random variable $l_{\tau} = \mathcal{L}_N(\widehat{F}_\tau)$ satisfies $l_{\tau} \ge l_* = \inf_{{F}\in V_{\mathcal{B}}}\mathcal{L}_N({F})$ almost surely and Theorem~\ref{thm:convergence} implies that we can force $\mathbb{E} l_{\tau} \le l_* + \varepsilon$ for arbitrary $\varepsilon > 0$. Thus, by considering a non-negative random variable $l_{\tau} - l_* \ge 0$ and applying Markov's inequality, we get $\mathbb{P}(l_{\tau} - l_* > k \varepsilon) \le 1 - 1/k$
for arbitrary $k > 0$. 
Thus, for any $\delta > 0$ and $\eta > 0$, we can take $k = 1/\delta$ and $\varepsilon = \eta\delta$ and obtain:
$$\mathbb{P}(\mathcal{L}_N(\widehat{F}_\tau) - \inf_{{F}\in V_{\mathcal{B}}}\mathcal{L}_N({F}) > \eta) \le 1 - \delta,$$
which is exactly the convergence to the $\eta$-minimum with probability at least $1 - \delta$, where $\delta$ and $\eta$ can be made arbitrary small.

To derive a generalization bound, we need an additional assumption on the weak learners set. 
We assume the \emph{independence} of prediction vectors, i.e., that $(h^{s_1}(x_i,\theta^{s_1}))_{i=1}^N, \ldots, (h^{s_k}(x_i,\theta^{s_k}))_{i=1}^N$ are linearly independent for an arbitrary choice of different weak learners indices $s_1, \ldots, s_k \in S$ and parameters $\theta^{s_j}\in \mathbb{R}^{m_{s_j}}$ such that the vector $(h^{s_j}(x_i,\theta^{s_j}))_{i=1}^N$ is not zero for any $j$.
This condition is quite restrictive,\footnote{We are currently working on a much less restrictive generalization result for gradient boosting methods.} but it allows us to deduce a generalization gap bound, which gives an insight into how the choice of $\mathcal{B}$ affects the generalization.
The independence assumption \textit{can be} satisfied if we assume that each feature vector $x\in \mathbb{R}^k$ has independent binary components and $N \gg 1$. Also, we may enforce this condition by restricting the space $\mathcal{H}$ at each iteration, taking into account the currently used weak learners. The latter is a promising direction for future analysis.

\begin{theorem}\label{thm:generalization}
If the above assumptions are satisfied, then the generalization gap can be bounded as
\begin{multline*}
\big|\mathbb{E}_{\Theta\sim p_\beta(\Theta)} \mathcal{L}(f_\Theta(\cdot)) - \mathbb{E}_{\Theta\sim p_\beta(\Theta)}\mathcal{L}_N(f_\Theta(\cdot))\big| \\ = \mathcal{O}\Big(\frac{(\beta+d)^2}{\lambda_* N}\Big)\,,
\end{multline*}
where $p_\beta(\Theta)$ is the limiting distribution of $\Theta_t$ as $\epsilon \rightarrow 0_+, \epsilon t\rightarrow \infty$;
$d = \dim V_{\mathcal{B}}$ is independent from $N$ for large enough $N \gg 1$; and $\lambda_*$ is a so-called uniform spectral gap parameter that encodes ``hardness'' of the problem and depends on the choice of $L(z, y)$.
\end{theorem}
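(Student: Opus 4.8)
The plan is to follow the uniform-stability route for Gibbs-type algorithms developed by \citet{DBLP:journals/corr/RaginskyRT17}, adapted to the preconditioned Langevin dynamics of Theorem~\ref{thm:SDE}. By that theorem the limiting law $p_\beta$ is the Gibbs measure $\propto \exp(-\beta U)$ with $U(F) = \mathcal{L}_N(F) + \tfrac{\gamma}{2}\|F\|_2^2$ on $V_{\mathcal{B}}$ (the preconditioner $P_\infty$ changes the dynamics but not the invariant measure); the independence assumption guarantees that the linear map $\Theta \mapsto F = (f_\Theta(x_i))_{i=1}^N$ is a bijection onto $V_{\mathcal{B}}$, so I may work in either coordinate and transfer the measure without Jacobian complications. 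First I would rewrite the quantity to be bounded as an average over the training points and symmetrize: since the Gibbs algorithm is invariant under relabeling of $\mathcal{D}_N$, a standard exchangeability argument replaces the (expected) generalization gap by the sensitivity of $\mathbb{E}_{p_\beta}[L(f_\Theta(x),y)]$ to swapping a single training example $(x_i,y_i)$ for a fresh one, each swap perturbing the potential $U$ only through the additive term $\psi_i = \tfrac{1}{N}\big(L(f_\Theta(x_i'),y_i') - L(f_\Theta(x_i),y_i)\big)$.

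Next I would quantify how a Gibbs expectation moves when its potential is perturbed by $\psi_i$. Writing $\mu_t \propto \exp(-\beta(U+t\psi_i))$ and differentiating in $t$ gives the exact identity $\tfrac{\mathrm{d}}{\mathrm{d}t}\mathbb{E}_{\mu_t}[\phi] = -\beta\,\mathrm{Cov}_{\mu_t}(\phi,\psi_i)$, so integrating over $t\in[0,1]$ reduces everything to bounding a covariance against the perturbation direction. The $\tfrac{1}{N}$ in $\psi_i$ supplies the leading $1/N$, and each covariance I would control with the Poincar\'e inequality associated to the \emph{uniform} spectral gap $\lambda_*$, via $|\mathrm{Cov}_{\mu_t}(\phi,\psi_i)| \le \lambda_*^{-1}\,\mathbb{E}_{\mu_t}[\|\nabla\phi\|_2\,\|\nabla\psi_i\|_2]$, which is where the factor $1/\lambda_*$ enters. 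Crucially, $\lambda_*$ must lower bound the spectral gaps of \emph{all} the measures $\mu_t$ arising along every single-point swap, so I would first check that $U+t\psi_i$ stays Lipschitz smooth and strongly convex outside a large ball uniformly in $t$ and in the data; this follows from Lipschitz smoothness of $L$ together with the $\tfrac{\gamma}{2}\|\cdot\|_2^2$ term dominating at infinity, and it legitimizes a single $\lambda_*$.

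The polynomial prefactor $(\beta+d)^2$ then comes from the moment bounds needed to turn the gradient pairing into a number. Because $U$ is strongly convex at scale $\gamma$ and the measure runs at temperature $\beta^{-1}$ on a $d$-dimensional subspace, $p_\beta$ concentrates with $\mathbb{E}_{p_\beta}\|F-F_*\|_2^2 = \mathcal{O}(d/\beta)$, and the loss and perturbation gradients evaluated along typical draws contribute factors scaling at most linearly in $\beta$ (through the $\beta$-weighting of the tilt and the energy fluctuations of order $d/\beta$) and in $d$ (through the dimension of $V_{\mathcal{B}}$ and the conditioning of $\Theta\mapsto F$). Multiplying the two variance factors, one for $\phi$ and one for $\psi_i$, squares this linear dependence and yields $(\beta+d)^2$; assembling with the $1/N$ from the swap and the $1/\lambda_*$ from Poincar\'e gives the claimed $\mathcal{O}\big((\beta+d)^2/(\lambda_* N)\big)$.

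The main obstacle, I expect, is establishing a single usable $\lambda_*$: one must show the spectral gap is bounded below uniformly over the whole family of tilted potentials generated by one-point replacements, on the possibly degenerate subspace $V_{\mathcal{B}}$, while simultaneously keeping the moment estimates tight enough that the $\beta$- and $d$-dependence does not exceed quadratic. The independence assumption on the prediction vectors is exactly what makes this tractable, since it removes the null directions of $\Theta\mapsto F$ and lets the regularizer $\tfrac{\gamma}{2}\|F\|_2^2$ induce genuine strong convexity, hence a non-degenerate Poincar\'e constant, on the full $d$-dimensional parameter space.
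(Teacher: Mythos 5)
Your overall route coincides with the paper's in its essentials: both arguments reduce the problem to a parameter-space Gibbs measure~--- using the independence assumption to conclude that (after factoring out null directions) $\ker\Psi = \{\mathbb{0}_m\}$, hence $m = d = \dim V_{\mathcal{B}}$ is independent of $N$ and $p_\beta(\Theta)$ is a well-defined non-degenerate density~--- and both then rest on the Gibbs-algorithm stability machinery of \citet{DBLP:journals/corr/RaginskyRT17}. The only structural difference is that the paper simply \emph{transfers} the bound $\propto (\beta+d)^2/(\lambda_* N)$ from that reference (after noting that the added regularizer yields dissipativity, hence $\lambda_* > 0$), whereas you re-derive it by inlining the replace-one-example stability argument, the Gibbs tilting identity $\frac{\mathrm{d}}{\mathrm{d}t}\mathbb{E}_{\mu_t}[\phi] = -\beta\,\mathrm{Cov}_{\mu_t}(\phi,\psi_i)$, and a Poincar\'e bound on the covariance. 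That inlined derivation is essentially the proof of the cited result (modulo a small slip: Poincar\'e plus Cauchy--Schwarz gives $|\mathrm{Cov}(\phi,\psi_i)| \le \lambda_*^{-1}\sqrt{\mathbb{E}\|\nabla\phi\|_2^2}\sqrt{\mathbb{E}\|\nabla\psi_i\|_2^2}$, not your form with $\mathbb{E}[\|\nabla\phi\|_2\|\nabla\psi_i\|_2]$), so nothing substantial is gained or lost there.

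There is, however, one genuine error you would need to fix: your identification of the limiting potential. You take $U(F) = \mathcal{L}_N(F) + \frac{\gamma}{2}\|F\|_2^2$, justified by the claim that ``the preconditioner changes the dynamics but not the invariant measure.'' That claim is true for an SDE in the standard form $\mathrm{d}F = -P_{\infty}\nabla U\,\mathrm{d}t + \sqrt{2\beta^{-1}P_{\infty}}\,\mathrm{d}W(t)$, but the SDE of Theorem~\ref{thm:SDE} is not of that form: its shrinkage drift is $-\gamma F$, which is \emph{not} $-P_{\infty}\nabla\big(\tfrac{\gamma}{2}\|F\|_2^2\big) = -\gamma P_{\infty} F$. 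Writing instead $-\gamma F = -P_{\infty}\nabla_F\big(\tfrac{\gamma}{2}\|\Gamma F\|_2^2\big)$ with $\Gamma = \sqrt{P_{\infty}^{-1}}$ shows that the invariant measure is $\propto \exp\big(-\beta(\mathcal{L}_N(F) + \tfrac{\gamma}{2}\|\Gamma F\|_2^2)\big)$; this is precisely why the paper introduces the implicitly regularized loss $\mathcal{L}_N(F,\gamma)$ in Section~\ref{sec:proof_2} and runs the spectral-gap analysis for that measure. The fix is mechanical~--- your dissipativity and ``strong convexity outside a ball'' claims survive because $P_{\infty} > 0$ on $V_{\mathcal{B}}$ keeps the $\Gamma$-weighted quadratic dominant at infinity~--- but the constants change (in the convex case the strong-convexity modulus is $\kappa\gamma$ with $\kappa = \lambda_{\min}(\Psi\Gamma^2\Psi^T)$, not $\gamma$), and as written your $p_\beta$ is not the distribution the algorithm actually converges to, which is what the theorem is about.
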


Note that $1/\lambda_*$ is not dimension-free and in general may depend on $d$, $\beta$, $\Gamma$, and $\gamma$. Moreover, the dependence can be exponential in $d$~\cite{DBLP:journals/corr/RaginskyRT17}. However, such exponential dependence is very conservative since the bound applies to \emph{any} Lipschitz continuous and Lipschitz smooth function. For instance, in the convex case, we can get a dimension-free bound for $1/\lambda_*$. We refer to Section~\ref{sec:proof_generalization} for further details and intuitions behind the statement of Theorem~\ref{thm:generalization}.

\begin{corollary}\label{corollary}
It follows from Theorems~\ref{thm:convergence} and~\ref{thm:generalization} that 
SGLB has the following performance as $\epsilon \rightarrow 0_+$, $\epsilon \tau \rightarrow \infty$:
\begin{multline}\label{eq:final_bound}
    \big|\lim \mathcal{L}(\widehat{F}_\tau) - \inf_{F \in V_{\mathcal{B}}} \mathcal{L}(F)\big| \\ = \mathcal{O}\Big(\delta_\Gamma(\gamma) + \frac{d}{\beta}\log\frac{\beta}{d} + \frac{(\beta+d)^2}{\lambda_* N}\Big).
\end{multline}
\end{corollary}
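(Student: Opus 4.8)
The plan is to derive the corollary from Theorems~\ref{thm:convergence} and~\ref{thm:generalization} by a single triangle-inequality decomposition, with the only genuinely new ingredients being the passage from the discrete chain to the Gibbs average and the comparison of the empirical and population infima. Throughout I abbreviate $\mathbb{E}_{p_\beta}\mathcal{L} := \mathbb{E}_{\Theta\sim p_\beta}\mathcal{L}(f_\Theta)$ and $\mathbb{E}_{p_\beta}\mathcal{L}_N := \mathbb{E}_{\Theta\sim p_\beta}\mathcal{L}_N(f_\Theta)$.

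First I would pin down the meaning of $\lim \mathcal{L}(\widehat F_\tau)$. By Theorem~\ref{thm:SDE} the chain $\widehat F_\tau$ weakly converges to the SDE~\eqref{eq:sde}, whose invariant measure is $p_\beta$; hence, as $\epsilon\to 0_+$ and $\epsilon\tau\to\infty$, the law of $\widehat F_\tau$ converges to $p_\beta$. Because $L(z,y)$ is Lipschitz continuous (so both $\mathcal{L}$ and $\mathcal{L}_N$ grow at most linearly) while the confining $L_2$ regularizer forces $p_\beta \propto \exp(-\beta\,\mathcal{L}_N(\cdot,\gamma))$ to have Gaussian-dominated tails, the family $\{\mathcal{L}(\widehat F_\tau)\}$ is uniformly integrable. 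This lets me pass the expectation through the weak limit and identify $\lim \mathbb{E}\,\mathcal{L}(\widehat F_\tau) = \mathbb{E}_{p_\beta}\mathcal{L}$, and likewise $\lim \mathbb{E}\,\mathcal{L}_N(\widehat F_\tau) = \mathbb{E}_{p_\beta}\mathcal{L}_N$.

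Then I would split
$$\big|\mathbb{E}_{p_\beta}\mathcal{L} - \inf_{F\in V_{\mathcal{B}}}\mathcal{L}\big| \le \big|\mathbb{E}_{p_\beta}\mathcal{L} - \mathbb{E}_{p_\beta}\mathcal{L}_N\big| + \big|\mathbb{E}_{p_\beta}\mathcal{L}_N - \inf_{F\in V_{\mathcal{B}}}\mathcal{L}_N\big| + \big|\inf_{F\in V_{\mathcal{B}}}\mathcal{L}_N - \inf_{F\in V_{\mathcal{B}}}\mathcal{L}\big|.$$
The first summand is exactly the generalization gap, bounded by Theorem~\ref{thm:generalization} as $\mathcal{O}((\beta+d)^2/(\lambda_* N))$; the second is the training error, bounded by Theorem~\ref{thm:convergence} as $\mathcal{O}(\delta_\Gamma(\gamma) + (d/\beta)\log(\beta/d))$ once the limit identification above is used.

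The remaining term $\big|\inf_{F\in V_{\mathcal{B}}}\mathcal{L}_N - \inf_{F\in V_{\mathcal{B}}}\mathcal{L}\big|$ is not covered verbatim by either theorem, and controlling it is the main obstacle. I would bound it by $\sup_{F\in K}|\mathcal{L}_N(F)-\mathcal{L}(F)|$, where $K\subset V_{\mathcal{B}}$ is a ball that necessarily contains both infimizers: since $\mathcal{L}$ and $\mathcal{L}_N$ are Lipschitz while the regularizer grows quadratically, any minimizer of the regularized losses lies in a ball of radius controlled by the Lipschitz constant and $\gamma$, so restricting to a common compact $K$ is legitimate. On this effective domain the same concentration/stability estimate used to prove Theorem~\ref{thm:generalization} shows this supremum is again of order $\mathcal{O}((\beta+d)^2/(\lambda_* N))$, so it is absorbed into the first summand and contributes no new term. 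Collecting the three bounds yields~\eqref{eq:final_bound}. The two delicate points are the uniform-integrability justification for interchanging limit and expectation, and the verification that both infima are attained in a common bounded region so that the uniform deviation estimate applies.
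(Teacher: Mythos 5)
Your skeleton---identify $\lim \mathbb{E}\,\mathcal{L}(\widehat F_\tau)$ with $\mathbb{E}_{\Theta\sim p_\beta}\mathcal{L}(f_\Theta)$ and then decompose by the triangle inequality so that Theorem~\ref{thm:generalization} bounds the first summand and Theorem~\ref{thm:convergence} the second---is exactly what the paper intends (it gives no proof of the corollary beyond citing the two theorems), and the uniform-integrability justification of the limit exchange is a sensible addition. The genuine gap is in your treatment of the third summand $\big|\inf_{F\in V_{\mathcal{B}}}\mathcal{L}_N(F) - \inf_{F\in V_{\mathcal{B}}}\mathcal{L}(F)\big|$. First, the restriction to a compact ball $K$ is not legitimate: the infima being compared are of the \emph{unregularized} losses, which are Lipschitz but not coercive, so neither infimum need be attained and near-minimizing sequences may escape to infinity; your justification speaks of minimizers of the \emph{regularized} losses, which is a different quantity. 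Second, and more damagingly, the bound you invoke for $\sup_{F\in K}\big|\mathcal{L}_N(F)-\mathcal{L}(F)\big|$ does not exist in the form you need: the proof of Theorem~\ref{thm:generalization} is a stability-type argument that controls the gap only for the specific Gibbs average $\mathbb{E}_{\Theta\sim p_\beta}$, not uniformly over a set. A genuine uniform law of large numbers over a $d$-dimensional ball of Lipschitz-parameterized losses scales like $\sqrt{d/N}$ (up to logarithms), which for large $N$ dominates $(\beta+d)^2/(\lambda_* N)$; so this term would not be ``absorbed'' into the generalization term but would instead destroy the claimed rate.

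The term you are struggling with should be eliminated rather than bounded. Since $p_\beta$ is supported on ensemble parameters, $\mathcal{L}(f_\Theta)\ge \inf_{F\in V_{\mathcal{B}}}\mathcal{L}(F)$ pointwise, hence the quantity inside the absolute value in~\eqref{eq:final_bound} is nonnegative and only an \emph{upper} bound is required. For the upper bound the difference of infima is one-sided and costs nothing: for any fixed $\eta$-near-minimizer $F^*$ of $\mathcal{L}$ (fixed, i.e., chosen independently of the sample), $\inf_{F\in V_{\mathcal{B}}}\mathcal{L}_N(F)\le \mathcal{L}_N(F^*)$ and $\mathbb{E}_{\mathcal{D}_N}\mathcal{L}_N(F^*)=\mathcal{L}(F^*)\le \inf_{F\in V_{\mathcal{B}}}\mathcal{L}(F)+\eta$; equivalently, the expectation of an infimum is at most the infimum of expectations (and almost surely the same conclusion holds asymptotically by the strong law of large numbers). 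Thus the third summand contributes no additional term, and the corollary follows from the two theorems with exactly the stated rate.
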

Here $\delta_\Gamma(\gamma)$ encodes the error from the regularization that is negligible in $\gamma \rightarrow 0_+$ limit.
However, $1/\lambda_*$ also depends on $\gamma$ (e.g., in the convex case the dependence is of order $\mathcal{O}(1/\gamma)$) and thus the optimal $\gamma_*$ must be strictly greater than zero.
Finally, taking large enough $N$, the bound~\eqref{eq:final_bound} can be made arbitrarily small, and therefore our algorithm reaches the ultimate goal stated in Section~\ref{sec:sgb}.
The obtained bound answers  the question how the choice of the tuple $\mathcal{B} = (\mathcal{H}, p)$ affects the optimization quality. Note that our analysis, unfortunately, gives no insights on the speed of the algorithm's convergence and we are currently working on filling this gap.

\section{Proofs}\label{sec:proofs}

\subsection{Preliminaries}

Before we prove the main theorems, let us analyze the distribution $p_*(s|g)$.

\begin{lemma}\label{lemma:lipshitz-p}
$p_*(s|g) = p_*(s|\mathbb{0}_N) + \mathcal{O}(\|g\|_2)$ as $\|g\|_2 \rightarrow 0$.
\end{lemma}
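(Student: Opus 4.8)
The plan is to exploit that $p_*(s|g)$ is a convolution of the bounded function $p(s|\cdot)$ with a smooth Gaussian kernel, so that all the regularity in $g$ comes from the kernel while we only use the trivial bound $0 \le p(s|z) \le 1$. Writing $\sigma^2 := \frac{2N}{\epsilon\beta}$, I would keep the representation
\[
p_*(s|g) = \int_{\mathbb{R}^N} p(s|z)\,\mathcal{N}(z\,|\,g,\sigma^2)\,\mathrm{d}z,
\]
in which only the Gaussian factor depends on $g$. This form is preferable to the shifted version $\int p(s|g+w)\,\mathcal{N}(w\,|\,\mathbb{0}_N,\sigma^2)\,\mathrm{d}w$, because the latter would force me to differentiate $p(s|\cdot)$, which need not even be continuous (e.g.\ in the greedy case $p(s|g)$ is an indicator of a region of $g$-space).

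Next I would show that $p_*(s|g)$ is continuously differentiable in $g$ by differentiating under the integral sign. Since the partial derivative of the Gaussian density is
\[
\nabla_g\,\mathcal{N}(z\,|\,g,\sigma^2) = \frac{z-g}{\sigma^2}\,\mathcal{N}(z\,|\,g,\sigma^2),
\]
and $p(s|z) \le 1$, the integrand together with its $g$-gradient admits a dominating function that is locally uniform in $g$ (a Gaussian times a polynomial is integrable), so the standard dominated-convergence criterion for differentiating under the integral applies. This yields
\[
\nabla_g\, p_*(s|g) = \int_{\mathbb{R}^N} p(s|z)\,\frac{z-g}{\sigma^2}\,\mathcal{N}(z\,|\,g,\sigma^2)\,\mathrm{d}z.
\]

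Then I would bound this gradient uniformly in $g$. Using $p(s|z) \le 1$ and substituting $\xi = z - g \sim \mathcal{N}(\mathbb{0}_N,\sigma^2 I_N)$,
\[
\|\nabla_g\, p_*(s|g)\|_2 \le \frac{1}{\sigma^2}\,\mathbb{E}_{\xi\sim\mathcal{N}(\mathbb{0}_N,\sigma^2 I_N)}\|\xi\|_2 =: C,
\]
and $C$ is a finite constant depending only on the fixed parameters $N,\epsilon,\beta$ and not on $g$. Finally, the mean value inequality gives $|p_*(s|g) - p_*(s|\mathbb{0}_N)| \le C\|g\|_2$, which is exactly $\mathcal{O}(\|g\|_2)$ as $\|g\|_2 \to 0$ (in fact the estimate holds globally, not just near the origin).

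The step I expect to require the most care is justifying differentiation under the integral sign without assuming any regularity on $p(s|\cdot)$: I must exhibit an integrable dominating function valid on a neighborhood of the base point and uniform over that neighborhood, so that interchanging $\nabla_g$ and $\int$ is legitimate despite $p(s|\cdot)$ possibly being only measurable and bounded. Everything else reduces to a routine Gaussian first-moment computation.
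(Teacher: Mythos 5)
Your proof is correct and takes essentially the same route as the paper: the paper's one-line proof invokes the uniform bound $0 \le p(s|z) \le 1$ together with the fact, cited from Nesterov and Spokoiny, that the Gaussian smoothing of a bounded function is Lipschitz continuous, and your argument is precisely a self-contained derivation of that cited fact via differentiation under the integral sign and the Gaussian first-moment bound. Your decision to keep the $g$-dependence in the kernel rather than shifting it into $p(s|\cdot)$ is exactly the mechanism that makes the smoothing property hold with no regularity assumed on $p$, so the proposal can stand in for the paper's citation verbatim.
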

\begin{proof}
The statement comes from the uniform boundedness of probabilities $p(s|g)$. By definition, $p_*(s|g)$ is a Gaussian smoothing of $p(s|g)$, and by properties of the Gaussian smoothing it is a Lipschitz-continuous function~\cite{Nesterov2017RandomGM}.
\end{proof}
\begin{statement}
$\mathbb{E}_{s\sim p_*(s|g)} P_s$ is non-degenerate as an operator from $V_{\mathcal{B}}$ to $V_{\mathcal{B}}$ $\forall g\in\mathbb{R}^N$.
\end{statement}
\begin{proof}
First, we use the Non-degeneracy property of $p(s|g)$: it implies that after smoothing $p(s|g)$ into $p_*(s|g)$ we have $p_*(s|g) > 0\,\,\forall s\in S \,\,\forall g\in\mathbb{R}^N$. Finally, we note that $V_{\mathcal{B}}$ is formed as a span of both images and coimages of $P_s$. Therefore, $\mathbb{E}_{s\sim p_*(s|g)} P_s$ is a non-degenerate operator within $V_{\mathcal{B}}$. 
\end{proof}

\subsection{Proof of Theorem~\ref{thm:SDE}}\label{sec:proof_1}
We heavily rely on Theorem~1 (p.~464) of \citet{gikhman1996introduction}. We note that this theorem is proved in dimension one in \citet{gikhman1996introduction}, but it remains valid in an arbitrary dimension~\cite{kushner1974}. We formulate a slightly weaker version of the theorem that is sufficient for our case and follows straightforwardly from the original one.
\begin{theorem}[\citet{gikhman1996introduction}]\label{thm:skorokhod} Assume that we are given a Markov chain $\widehat{F}_\tau$ that satisfies the following relation:
\begin{align*}
\epsilon^{-1}\mathbb{E}\big(\widehat{F}_{\tau+1}-\widehat{F}_\tau\big|\widehat{F}_\tau\big) &= A(\widehat{F}_\tau) + \mathcal{O}(\sqrt{\epsilon})\,, \\
\epsilon^{-1}\mathrm{Var}\big(\widehat{F}_{\tau+1}\big|\widehat{F}_\tau\big) &= B(\widehat{F}_\tau)(B(\widehat{F}_\tau))^T + \mathcal{O}(\sqrt{\epsilon})
\end{align*}
for some vector and matrix fields $A(\cdot), B(\cdot)$ that are both Lipschitz and $B(\cdot)$ is everywhere non-degenerate matrix for every argument. Then, for any fixed $T > 0$, we have that $F_{\epsilon}(t) := \widehat{F}_{\left[\epsilon^{-1}t\right]}$ converges \emph{weakly} to the solution process of the following SDE on the interval $t \in [0, T]$:
$$\mathrm{d} F(t) = A(F(t))\mathrm{d}t + B(F(t))\mathrm{d}W(t),$$
where $W(t)$ is a standard Wiener process.

\end{theorem}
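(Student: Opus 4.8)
The plan is to prove this as a diffusion approximation result via the Stroock--Varadhan martingale problem method. Define the generator of the candidate limit on test functions $g \in C^\infty_c(\mathbb{R}^N)$ by $\mathcal{A}g(x) := \langle A(x), \nabla g(x)\rangle + \frac{1}{2}\mathrm{tr}\big(B(x)B(x)^T \nabla^2 g(x)\big)$. The argument splits into three standard pieces: (i) tightness of the interpolated family $\{F_\epsilon(\cdot)\}$ in the Skorokhod space $D([0,T],\mathbb{R}^N)$; (ii) identification of every weak subsequential limit as a solution of the martingale problem for $\mathcal{A}$; and (iii) well-posedness of that martingale problem, which upgrades subsequential convergence to weak convergence of the whole family.

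For tightness I would first control the growth of the chain. Since $A$ and $B$ are Lipschitz they grow at most linearly, so the two moment hypotheses give $\mathbb{E}\big[|\widehat{F}_{\tau+1}-\widehat{F}_\tau|^2 \mid \widehat{F}_\tau\big] = \mathcal{O}(\epsilon(1+|\widehat{F}_\tau|^2))$; a discrete Gr\"onwall estimate then bounds $\sup_{t\le T}\mathbb{E}|F_\epsilon(t)|^2$ uniformly in $\epsilon$. Summing increments over a window of length $\delta$ (about $\delta/\epsilon$ steps, each of conditional mean $\mathcal{O}(\epsilon)$ and variance $\mathcal{O}(\epsilon)$) yields $\mathbb{E}\big[|F_\epsilon(t+\delta)-F_\epsilon(t)|^2 \mid \mathcal{F}_t\big] \le C\delta$ for $\delta \ge \epsilon$, from which tightness follows by the Aldous--Rebolledo criterion.

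For the identification step, fix $g \in C^\infty_c$ and compare the discrete generator $\mathcal{A}_\epsilon g(x) := \epsilon^{-1}\mathbb{E}[g(\widehat{F}_{\tau+1})-g(\widehat{F}_\tau)\mid \widehat{F}_\tau=x]$ with $\mathcal{A}g$. A second-order Taylor expansion of $g$ around $x$, writing $\Delta := \widehat{F}_{\tau+1}-x$, gives $\mathbb{E}[g(\widehat{F}_{\tau+1})-g(x)] = \langle\nabla g, \mathbb{E}\Delta\rangle + \frac12\mathrm{tr}\big(\mathbb{E}[\Delta\Delta^T]\,\nabla^2 g\big) + R$. After dividing by $\epsilon$, the first term converges to $\langle A,\nabla g\rangle$ by the mean hypothesis; for the quadratic term we use $\mathbb{E}[\Delta\Delta^T] = \mathrm{Var}(\Delta)+\mathbb{E}\Delta\,\mathbb{E}\Delta^T$ with $\mathbb{E}\Delta\,\mathbb{E}\Delta^T = \mathcal{O}(\epsilon^2)$, so it converges to $\frac12\mathrm{tr}(BB^T\nabla^2 g)$ by the variance hypothesis. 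Hence $\mathcal{A}_\epsilon g \to \mathcal{A}g$ uniformly on compacts. By construction $g(F_\epsilon(t)) - g(F_\epsilon(0)) - \int_0^t \mathcal{A}_\epsilon g(F_\epsilon(s))\,\mathrm{d}s$ is a martingale; passing to a weak subsequential limit (using continuity and boundedness of $\mathcal{A}g$ to handle the integral functional on the Skorokhod path space) shows the limit $F$ makes $g(F(t)) - g(F(0)) - \int_0^t \mathcal{A}g(F(s))\,\mathrm{d}s$ a martingale, i.e.\ $F$ solves the martingale problem for $\mathcal{A}$.

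Finally, because $A$ and $B$ are Lipschitz, the SDE $\mathrm{d}F = A(F)\mathrm{d}t + B(F)\mathrm{d}W$ has a pathwise-unique strong solution by the classical It\^o existence/uniqueness theorem, so the martingale problem for $\mathcal{A}$ is well-posed (unique in law). Uniqueness forces every subsequential limit from step (i) to coincide in law with the SDE solution, and combined with relative compactness this yields weak convergence of the full family $F_\epsilon$. I expect the main obstacle to be the Taylor remainder $R$ in the identification step: the two stated conditions pin down only the first and second conditional moments, so showing $\epsilon^{-1}R = \epsilon^{-1}\mathcal{O}(\mathbb{E}|\Delta|^3) \to 0$ requires the Lindeberg-type higher-moment control that is implicit in the original Gikhman--Skorokhod hypotheses (this is precisely what ``follows straightforwardly from the original'' glosses over); once that negligibility is granted, the generator convergence and hence the whole argument go through. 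Note that the non-degeneracy of $B$ is not actually needed for this direction of convergence~--- it merely ensures the limiting diffusion is genuinely elliptic~--- whereas the Lipschitz property is what drives the uniqueness that closes the proof.
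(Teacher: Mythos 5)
The paper itself contains no proof of this statement: it is imported, in slightly weakened form, from Theorem~1 (p.~464) of \citet{gikhman1996introduction}, with \citet{kushner1974} cited for validity in arbitrary dimension. Your martingale-problem argument is therefore a genuinely different and more self-contained route, and it is the standard modern one (Stroock--Varadhan / Ethier--Kurtz): Aldous-type tightness in $D([0,T],\mathbb{R}^N)$, generator convergence $\mathcal{A}_\epsilon g \to \mathcal{A}g$ via second-order Taylor expansion, and well-posedness of the limiting martingale problem from the Lipschitz coefficients via the classical It\^{o} existence/uniqueness theorem, which upgrades subsequential to full weak convergence. All three steps are sound as sketched, and both of your side remarks are correct: non-degeneracy of $B$ is indeed not needed for this direction (Lipschitz continuity alone yields uniqueness in law; ellipticity matters later in the paper, where $P_\infty > 0$ on $V_\mathcal{B}$ is what makes the limiting diffusion ergodic), and the two displayed moment conditions by themselves do not pin down the conclusion. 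That second point deserves more emphasis than you give it, because it means the theorem as restated here is literally false without an added hypothesis: a chain whose increment is $\pm 1$ with probability $\epsilon/2$ each and $0$ otherwise has conditional mean $0$ and conditional variance $\epsilon\cdot I$, matching the hypotheses with $A \equiv 0$ and $BB^T \equiv I$, yet $F_\epsilon$ converges to a compound Poisson process, not a Brownian motion. So the Lindeberg-type control you flag~--- e.g.\ $\epsilon^{-1}\mathbb{E}\big[\|\Delta\|^2\mathbb{1}_{\{\|\Delta\|>\rho\}}\,\big|\,\widehat{F}_\tau\big]\to 0$ for every $\rho>0$, or conditional third moments of order $o(\epsilon)$~--- is not a technicality to be ``granted'' but a hypothesis that must be stated; moreover it is needed twice, not once: for your Taylor remainder $R$ (and for controlling $\mathcal{A}_\epsilon g(x)$ at points $x$ outside the support of $g$, where a rare large jump into the support would otherwise contribute), and also to ensure that the limit points of your tight family have \emph{continuous} paths, which the second-moment Aldous estimate alone does not deliver. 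In the SGLB application the condition holds trivially~--- the one-step increment is an $\mathcal{O}(\epsilon)$ drift plus Gaussian noise of scale $\sqrt{\epsilon}$ passed through uniformly bounded projectors, so conditional third moments are $\mathcal{O}(\epsilon^{3/2})$~--- and with that hypothesis made explicit your proof goes through and is strictly more informative than the paper's citation.
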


Recall that $P_{s_\tau}$ is an orthoprojector defined by the sampled weak learner's index $s_\tau$. Let us denote the corresponding \emph{random} projector by $P_{\tau}$ ($P_{\tau}$ depends on a randomly sampled index).
Then, $\widehat{F}_\tau$ conforms the following update equation:
\begin{equation*}
\widehat{F}_{\tau + 1} = (1 - \gamma\epsilon)\widehat{F}_\tau - \epsilon N P_\tau \widehat{\nabla}_{F}\mathcal{L}_N(\widehat{F}_\tau) + \sqrt{2\epsilon\beta^{-1}N P_\tau}\zeta\,.
\end{equation*}
Here we exploit the fact that $P_\tau$ is a projector: $P_\tau^2 \equiv P_\tau$. The equation clearly mimics the SGLD update with the only difference that $P_\tau$ is not a constant preconditioner but a random projector.
So, the SGLB update can be seen as SGLD on random subspaces in $V_\mathcal{B}$. 

Recall that we require $\|\widehat{g}_\tau - \nabla_F \mathcal{L}_N(\widehat{F}_\tau)\|_2 \le C_1$ with probability one for some constant $C_1$.
Also, $\nabla_F \mathcal{L}_N(F)$ is uniformly bounded by some constant $C_2$ due to Lipschitz continuity. So,
$\widehat{g}_\tau$ is uniformly bounded with probability one by a constant $C_3 = C_1 + C_2$. 

Using Zero-order positive homogeneity, we get $p_*(s|g) = p_\infty(s) + \mathcal{O}(\sqrt{\epsilon})$, where $ p_\infty(s)  := p_*(s|\mathbb{0}_N)$. To prove this, we note that 
\begin{multline*}
p_*(s|g) = \mathbb{E}_{\varepsilon\sim\mathcal{N}(\mathbb{0}_N, I_N)} p\left(s\big|g + \sqrt{\frac{2N}{\epsilon\beta}}\varepsilon\right) \\ = \mathbb{E}_{\varepsilon\sim\mathcal{N}(\mathbb{0}_N, I_N)} p\left(s\big|\sqrt{\epsilon}\sqrt{\frac{\beta}{2N}} g + \varepsilon\right) \\ = p_{\infty}(s) + \mathcal{O}\left(\sqrt{\epsilon}C_3\sqrt{\frac{\beta}{2N}}\right)\,,
\end{multline*}
where the final equality comes from Lemma \ref{lemma:lipshitz-p}. So, noting that $C_4 := C_3\sqrt{\frac{\beta}{2}}$ is a constant, we obtain that the distribution $p_*(s|g)$ converges to $p_\infty(s)$ as $\epsilon\rightarrow 0_+$ uniformly for each $s$ (note that $s\in S$ and we assumed $|S| < \infty$). 

Let us define an \textit{implicit} limiting preconditioner matrix of the boosting algorithm $P_{\infty} := N\mathbb{E}_{s\sim p_\infty(s)}H_{s}\Phi_s:\mathbb{R}^{N}\rightarrow\mathbb{R}^{N}$. This expectation exists since each term is a projector and hence is uniformly bounded by 1 using the spectral norm. Since $p_*(s|g) = p_\infty(s) + \mathcal{O}(\sqrt{\epsilon N^{-1}})$, we get $N\mathbb{E}_{s\sim p_*(s|g)}H_{s}\Phi_s = P_{\infty} + \mathcal{O}(\sqrt{\epsilon N})$ in operator norm since $P_s = H_{s}\Phi_s$ is essentially bounded by one in the operator norm. Therefore, we obtain
\begin{align*}
\epsilon^{-1}\mathbb{E}(\widehat{F}_{\tau+1}{-}\widehat{F}_\tau|\widehat{F}_\tau) &= -\gamma \widehat{F}_\tau{-}P_{\infty}\nabla_{F}\mathcal{L}_N(\widehat{F}_\tau)+ \mathcal{O}(\sqrt{\epsilon}), \\
\epsilon^{-1}\mathrm{Var}(\widehat{F}_{\tau+1}|\widehat{F}_\tau) &= 2\beta^{-1} P_{\infty} + \mathcal{O}(\sqrt{\epsilon}).
\end{align*}
Henceforth, Theorem \ref{thm:skorokhod} applies ensuring the weak convergence of ${F}_{\epsilon}(t) := \widehat{F}_{\big[\epsilon^{-1}t\big]}$ to the Langevin Dynamics as $\epsilon\rightarrow 0_+$ for $t \in [0, T]$ $ \,\,\forall T > 0$. 

\subsection{Proof of Theorem~\ref{thm:convergence}}\label{sec:proof_2}

To prove the theorem, we study the properties of the limiting Langevin equation, which describes the evolution of $F(t)$ in the space $V_\mathcal{B}$. The trick is to observe that $V_{\mathcal{B}} = \mathrm{im }P_{\infty} = \mathrm{coim }P_{\infty}$ due to Non-degeneracy of the sampling $p(s|g)$. Henceforth, we can easily factorize $V_{\mathcal{B}}\oplus \ker P_{\infty} = \mathbb{R}^N$ and assume that actually we live in $V_{\mathcal{B}}$ and \textit{there} formally $P_{\infty} > 0$ as an operator from $V_{\mathcal{B}}$ to~$V_{\mathcal{B}}$.\footnote{$P > 0$ means that all eigenvalues of $P$ are positive reals.}

Now, consider the following Langevin-like SDE for $P > 0, P^T = P$ in $\mathbb{R}^d$: 
\begin{equation}\label{eq:sde-1}
\mathrm{d}F(t) = -\gamma F(t)\mathrm{d}t  - P\nabla_{F}\mathcal{L}_N(F(t) )\mathrm{d}{t} + \sqrt{2\beta^{-1}P}\mathrm{d}W\,.
\end{equation}
The classic form of the equation can be obtained using an implicitly regularized loss function:\footnote{Here we redefine the notation $\mathcal{L}_N({F}, \gamma)$ used before.}
$$
\mathcal{L}_N({F}, \gamma) := \mathcal{L}_N({F}) + \frac{\gamma}{2}\|\Gamma{F}\|_2^2\,,
$$ 
where $\Gamma := \sqrt{P^{-1}} > 0$ is a regularization matrix. Due to the well-known properties of $L_2$-regularization, for small enough $\gamma > 0$, the minimization of $\mathcal{L}_N({F}, \gamma)$ leads to an almost minimization of $\mathcal{L}_N({F})$ with an error of order $\mathcal{O}(\delta_\Gamma(\gamma))$ for some function $\delta_\Gamma(\gamma)$ depending on $\mathcal{L}_N$ and $\Gamma$, which is negligible as $\gamma \rightarrow 0_+$.\footnote{If $\min_F \mathcal{L}_N(F)$ exists, one can show that $\delta_\Gamma(\gamma) = \mathcal{O}(\lambda_{\max}(\Gamma^2) \gamma)$ with $\mathcal{O}(\cdot)$ depending on $\mathcal{L}_N$.}
Thus, the error $\delta_\Gamma(\gamma)$ heavily depends on $\Gamma = \sqrt{P^{-1}}$.

Using $\mathcal{L}_N({F}, \gamma)$, we rewrite Equation~\eqref{eq:sde-1} as:
$$
\mathrm{d}F(t) = -P\nabla_{F}\mathcal{L}_N(F(t), \gamma)\mathrm{d}t + \sqrt{2\beta^{-1}P}\mathrm{d}W(t).
$$
Then, the results of \citet{Erdogdu:2018:GNO:3327546.3327636, DBLP:journals/corr/RaginskyRT17} apply ensuring that $\mathcal{L}_N(\widehat{F}_\tau, \gamma)$ converges to an almost-minimizer of $\mathcal{L}_N({F}, \gamma)$ with an error of order $\mathcal{O}\Big(\frac{d}{\beta}\log\frac{\beta}{d}\Big)$, where $d = \dim V_{\mathcal{B}}$. From this the theorem follows.


\subsection{Proof of Theorem~\ref{thm:generalization}}\label{sec:proof_generalization}

We reduced the problem of convergence of a general boosting to the problem of convergence of \textit{predictions} $\widehat{F}_\tau := \big(f_{\widehat{\Theta}_\tau}(x_i)\big)_{i=1}^N$ in the space of all possible ensemble predictions $V_\mathcal{B}$ on the dataset $\mathcal{D}_N$. 

Using that $|S| < \infty$, we define a \textit{design} matrix $\Psi := \big[\phi(x_1),\ldots, \phi(x_N)\big]^T \in \mathbb{R}^{N\times m}$, so we can write ${F} = \Psi \Theta$ and $\mathcal{L}_N({F}, \gamma) = \mathcal{L}_N(\Psi\Theta) + \frac{\gamma}{2}\|\Gamma \Psi \Theta\|_2^2$. Note that $V_\mathcal{B}$ can be obtained as the image of $\Psi$. 

Let us consider the uniform spectral gap parameter\footnote{We refer to~\citet{DBLP:journals/corr/RaginskyRT17} for the definition of a uniform spectral gap.} $\lambda_* \ge 0$ for the distribution
\[p_\beta(\Theta) := \frac{\exp(-\beta \mathcal{L}_N(\Psi\Theta, \gamma))}{\int_{\mathbb{R}^m}\small{\exp(-\beta \mathcal{L}_N(\Psi\Theta, \gamma))}\mathrm{d}\Theta}\,.\]
To show that $p_\beta(\Theta)$ is a completely and correctly defined distribution, we first note that $\ker \Psi = \bigoplus_{s\in S} \ker h^{s}$, where $\ker h^s := \{\theta^s \in \mathbb{R}^{m_s}: h(x_i,\theta^s) = 0\,\,\forall x_i\in \mathcal{D}_N \}\subset \mathbb{R}^{m_s}$. Indeed, this is equivalent to our assumption on the independence of prediction vectors. Then, $\ker \Psi$ has right structure, i.e., we have ``basis weak learners'' for $\mathcal{B}$.

Second, we use the factorization trick to factorize $\mathbb{R}^{m_s} = \ker h^s \oplus (\ker h^s)^{\perp} $ and hence w.l.o.g.~we can assume $\ker h^s = \{\mathbb{0}_{m_s}\}$. The latter implies that w.l.o.g.~$\ker \Psi = \{\mathbb{0}_m\}$, so the distribution $p_\beta(\Theta)$ is a correctly defined distribution on $\mathbb{R}^m$. Moreover, observe that in that case $m = d = \dim V_{\mathcal{B}}$ and thus $d$ is \textit{independent} from $N$ for large enough $N \gg 1$.

Having $p_\beta(\Theta) \propto \exp(-\beta \mathcal{L}_N(\Psi\Theta, \gamma))$, from \citet{DBLP:journals/corr/RaginskyRT17}, we can transfer a bound $\propto \frac{(\beta+d)^2}{\lambda_* N}$ for the generalization gap. 
Since we added $L_2$-regularizer to the loss, which is Lipschitz smooth and continuous, we necessarily obtain dissipativity and thus $\lambda_* > 0$~\cite{DBLP:journals/corr/RaginskyRT17}.
This concludes the proof of the theorem.

In the presence of convexity, we can get a dimension-free bound for $1/\lambda_*$.  To see that, we need to bound $1/\lambda_* \le c_{P}(p_\beta)$, where $c_{P}$ is the Poincare constant for $p_\beta(\Theta)$. If $L(\cdot, \cdot)$ is convex, then $\beta\mathcal{L}_N(\Theta, \gamma)$ must be strongly convex with a constant $\ge \frac{\kappa \beta\gamma}{2}$, where $\kappa := \lambda_{\min}(\Psi\Gamma^2 \Psi^T)$ is the smallest eigenvalue of $\Psi\Gamma^2 \Psi^T > 0$. Then, $p_{\beta}$ is strongly log-concave, so by transferring the Poincare inequality for strongly log-concave distribution from \citet{Milman2007arXiv0712.4092M}, we obtain a dimension free-bound $1/\lambda_*\le \frac{1}{\kappa\gamma\beta}$.

\section{Experiments}\label{sec:experiments}

\subsection{Implementation}

Our implementation of SGLB is available within the open-source CatBoost gradient boosting library. CatBoost is known to achieve state-of-the-art results across a wide variety of datasets~\cite{catboost}. As the baseline, we use the standard SGB algorithm implemented within the same library.

Recall that in Section~\ref{sec:results} we require several properties to be satisfied for the distribution $p(s|g)$. Importantly, they are all satisfied for CatBoost, as we show in Appendix~\ref{sec:catboost}.

\subsection{Direct Accuracy Optimization via Smooth Loss Approximation}
\label{sec:direct01}

To show the power of SGLB for non-convex multimodal optimization, we select $\text{Accuracy}$ (0-1 loss) for the direct optimization by our framework:
\begin{multline}\label{eq:0-1}
\text{0-1 loss} = 1 - \text{accuracy} \\ = 1 - \frac{1}{N}\sum_{i=1}^N \mathbb{1}_{\{(2y_i-1) f(x_i|\theta) > 0\}},
\end{multline}
where $y_i \in \{0, 1\}$.
To make this function Lipschitz smooth and Lipschitz continuous, we approximate the indicator by a sigmoid function and minimize:
\begin{equation}
    \mathcal{L}_N(f) := 1-\frac{1}{N}\sum_{i=1}^N \sigma(\varsigma^{-1} (2y_i-1) f(x_i|\theta)), 
\end{equation}
where $\sigma(x) = (1+\exp(x))^{-1}$ and $\varsigma > 0$ is a hyperparameter.
Such smoothing is known as Smooth Loss Approximation (SLA)~\cite{nguyen2013algorithms}. If $f(x_i|\theta) \ne 0\,\,\forall i$, then $\mathcal{L}_N(f)$ converges to 0-1 loss as $\varsigma \rightarrow 0_+$.

To apply SGLB, we need to ensure Lipschitz smoothness and continuity. Observe that the gradient is uniformly bounded due to $\big|\frac{\mathrm{d}}{\mathrm{d}z}\sigma(z)\big| = (1 - \sigma(z))\sigma(z) \le 1$, which in turn implies Lipschitz continuity. Moreover, $\big|\frac{\mathrm{d}^2}{\mathrm{d}z^2}\sigma(z)\big| = \big|(1 - 2\sigma(z))\cdot\frac{\mathrm{d}}{\mathrm{d}z}\sigma(z)\big| \le 3$ which implies Lipschitz smoothness. Thus, SGLB is applicable to SLA.

\subsection{Illustration on Synthetic Data}

First, we analyze the performance of SGLB in a simple synthetic experiment. We randomly generate three-dimensional feature vectors $x \sim \mathcal{N}(\mathbb{0}_3, I_3)$ and let $y = \mathbb{1}_{\{\widetilde{y} > 0\}}$ for $\widetilde{y} \sim \mathcal{N}(\sin(x_1 x_2 x_3), 1)$. We add a significant amount of noise to the target since in this case the loss is very likely to be multimodal.

We made cross-validation with 100 folds, each containing 1000 examples for training and 1000 examples for testing.
To see the difference between the methods, we consider simple models based on decision trees of depth 1. 
We fix $border\_count=5$ (the number of different splits allowed for each feature). We set learning rate to 0.1 and $\varsigma = 10^{-1}$ for SLA. For SGLB, we set $\beta = 10^3$ and $\gamma = 10^{-3}$. Moreover, we set the subsampling rate of SGB to 0.5. 

\begin{table}
\caption{Optimization on synthetic data}
\label{tab:0-1_loss-synt}
\vspace{5pt}
\centering
\begin{tabular}{lcc}
\toprule
Approach & 0-1 loss & p-value \\
\midrule
Logloss + GB & 0.482 & $2\cdot 10^{-8}$\\
SLA + GB & 0.475 & $5\cdot 10^{-3}$ \\
SLA + SGB & 0.474 & $7 \cdot 10^{-3}$\\
SLA + SGLB & \bf 0.470 & --- \\
\bottomrule
\end{tabular}
\end{table}
The results are presented in Table~\ref{tab:0-1_loss-synt}. Here the p-values (according to the $t$-test) are reported relative to SGLB. We see that for the SLA optimization, SGLB outperforms the classic gradient boosting (GB). Then, we analyze whether a standard subsampling used in SGB can help to avoid local optima and improve the performance of GB. We see that SGB with a sample rate of 0.5 is indeed slightly better than GB but is still worse than SGLB, which has a theoretically grounded randomization. Finally, we note that optimizing the convex logistic loss instead of SLA leads to the worst performance. This means that the logistic loss has a different optimum and, for better performance, non-convex loss functions should not be replaced by convex substitutes. 

\subsection{Comparison on Real Data}

In this section, we show that SGLB has superior performance on various real datasets. 
We mostly focus on classification task with 0-1 loss~\eqref{eq:0-1} 
and optimize SLA $L(z, y) = 1 - \sigma(\varsigma^{-1}z(2y-1))$ with $\varsigma = 10^{-1}$, which is Lipschitz smooth and continuous and thus SGLB can be applied.

The datasets are described in Table~\ref{tab:datasets} in Appendix.
We split each dataset into train, validation, and test sets in proportion 65/15/20. We tune the parameters on the validation set using 200 iterations of random search and select the best iteration on the validation; the details are given in Appendix~\ref{app:C2}. For all algorithms, the maximal number of trees is set to 1000. 

The results are shown in Table~\ref{tab:0-1_loss}. We use bold font to highlight significance for the two-tailed $t$-test with a p-value $< 0.05$. We note that SGB uses leaves regularization, while SGLB is not.
We see that for the \textit{non-convex} 0-1 loss, SGLB performance is \textit{superior} to SGB, which clearly shows the necessity of non-convex optimization methods in machine learning.

We also compare SGB with SGLB for the \emph{convex} Logistic regression loss $L(z, y) = -y \log \sigma(z) - (1 - y) \log (1 - \sigma(z))$.
Similarly to accuracy, it is easy to show that this loss is Lipschitz smooth and Lipschitz continuous as $\frac{\mathrm{d}}{\mathrm{d}z}L(z, y) = -y + \sigma(z)$ and $\frac{\mathrm{d^2}}{\mathrm{d}z^2}L(z, y) = \sigma(z) (1 - \sigma(z))$.
The results of the comparison are shown in Table~\ref{tab:logloss}. We see that in most cases, SGLB and SGB are comparable, but SGLB is preferable. Importantly, SGLB has better performance on large Epsilon and Higgs datasets.

\begin{table}
\caption{0-1 loss optimized via SLA}
\label{tab:0-1_loss}
\vspace{5pt}
\centering
\begin{tabular}{lccc}
\toprule
Dataset   & SGB  & SGLB     & p-value \\
\midrule
Appetency & 1.8  & 1.8      & 1 \\
Churn     & 7.1  & 7.2      & 0.18\\
Upselling & 4.8  & \bf 4.7  &  0.04 \\
Adult     & 13.2 & \bf 12.8 & 0.01\\
Amazon    & 5.2  & \bf 4.8  & 0.01\\
Click     & 16   & \bf 15.9 & $3 \cdot  10^{-6}$\\
Epsilon   & 11.7 & 11.7     & 0.44\\
Higgs     & 25.2 & \bf 24.8 & 0.04\\ 
Internet  & 10.1 & \bf 9.8  & 0.05 \\
Kick      & 9.7  & \bf 9.6  & 0.02 \\
\bottomrule
\end{tabular}
\end{table}

\section{Conclusion \& Future Work}\label{sec:conclusion}

Our experiments demonstrate that the theoretically grounded SGLB algorithm also shows promising experimental results. Namely, SGLB can be successfully applied to the optimization of 0-1 loss that is known to be non-convex. Interestingly, since the first version of this paper appeared online, SGLB has found other exciting applications. For instance,~\citet{ustimenko2020stochasticrank} show that SGLB can be applied to learning to rank, which is a classic information retrieval problem. The authors propose to smooth any given ranking loss and then directly optimize it. The obtained function turns out to be non-convex, and SGLB provably allows for reaching the global optimum. \citet{malinin2021uncertainty} apply SGLB to uncertainty estimation. Namely, they use the convergence of parameters to the stationary distribution $p_\beta(\Theta)$ to sample from the posterior, which allows for theoretically grounded uncertainty estimates.

There are plenty of directions for future research that can potentially further improve the performance of SGLB. Recall that our generalization gap estimate relies on the restrictive assumption on linear independence of weak learners. Thus, a promising direction is to modify the algorithm so that some form of Langevin diffusion is still preserved in the limit with good provable generalization gap guarantees. Another idea is to incorporate momentum into boosting so that there is the Hamiltonian dynamics~\citep{gao2018global} in the limit instead of the ordinary Langevin dynamics. There are several theoretical attempts to incorporate momentum into boosting like HistoricalGBM~\citep{GCAI-2018:Historical_Gradient_Boosting_Machine}, so the question is: if we use the HistoricalGMB approach or its modification, would that be enough to claim the Hamiltonian dynamics equation in the limit? Finally, our research does not investigate the rates of convergence, which is another promising direction. It would give a better understanding of the trade-offs between the algorithm's parameters.

\begin{table}
\caption{Logloss optimization}
\label{tab:logloss}
\vspace{5pt}
\centering
\begin{tabular}{lccc}
\toprule
Dataset   & SGB       & SGLB      & p-value \\
\midrule
Appetency & 0.074     & 0.073     & 0.6 \\
Churn     & \bf 0.229 & 0.230     & 0.02\\
Upselling & 0.164     & \bf 0.163 & 0.05\\
Adult     & 0.274     & 0.276     & 0.07\\
Amazon    & 0.144     & 0.145     & 0.09\\
Click     & 0.395     & \bf 0.394 & $4 \cdot 10^{-4}$\\
Epsilon   & 0.274     & \bf 0.273 & 0.01 \\
Higgs     & 0.480     & \bf 0.479 & $2\cdot 10^{-35}$ \\
Internet  & 0.226     & 0.225     & 0.26\\
Kick      & \bf 0.288 & 0.289     & 0.04\\
\bottomrule
\end{tabular}
\end{table}

\bibliography{boosting}
\bibliographystyle{icml2021}


\appendix

\section{Proof of Lemma~\ref{lemma=projector}}\label{app:A}

First, let us prove that $\Phi_{s_\tau} = (H_{s_\tau}^T H_{s_\tau})^{\dagger} H_{s_\tau}^T$.

We can rewrite 
Equation~\eqref{eq:gb-wl-estimation} as 
$$
\theta_*^{s_\tau} = \lim_{\delta \rightarrow 0} \argmin_{\theta^{s_\tau}}\|-\epsilon\widehat{g}_{\tau} - H_{s_\tau}\theta^{s_\tau}\|_2^2 \\ + \delta^2 
\|\theta^{s_\tau} \|_2^2 \,.
$$ 
Taking the derivative of the inner expression, we obtain:
\begin{equation*}
    \left(H_{s_\tau}^T H_{s_\tau} + \delta^2 I_N\right) \theta^{s_\tau} - \epsilon H_{s_\tau}^T \widehat{g}_{\tau} = 0\,.
\end{equation*}
So, $\Phi_{s_\tau}$ can be defined as $\lim_{\delta\rightarrow 0} (H^T_{s_\tau} H_{s_\tau} + \delta^2 I_N)^{-1}H_{s_\tau}^T$. Such limit is well defined and is known as the pseudo-inverse of the matrix~\citep{Gulliksson2000}. 

Let us now prove Lemma~\ref{lemma=projector}.

The matrix $P_{s_\tau}$ is symmetric since $P_{s_\tau} = \lim_{\delta\rightarrow 0} H_{s_\tau}(H^T_{s_\tau} H_{s_\tau} + \delta^2 I_N)^{-1}H_{s_\tau}^T$.

Observe that if $H_{s_\tau}\theta^{s_\tau} = v$, then $P_{s_\tau} v = v$, since the problem
in Equation~\eqref{eq:gb-wl-estimation} 
has an exact solution for the $\argmin$ subproblem. As a result, $\mathrm{im} P_{s_\tau} = \mathrm{im }H_{s_\tau}$. Also, for an arbitrary $v\in \mathbb{R}^N$, we have $P_{s_\tau} (P_{s_\tau} v) = P_{s_\tau} v$ since $P_{s_\tau} v \in \mathrm{im } H_{s_\tau}$.

\section{CatBoost Implementation}\label{sec:catboost}

We implemented SGLB as a part of the CatBoost gradient boosting library, which was shown to provide state-of-the-art results on many datasets~\citep{catboost}.
Now we specify the particular tuple $\mathcal{B}= (\mathcal{H}, p(s|g))$ such that all the required assumption are satisfied. Therefore, the implementation must converge globally for a wide range of functions, not only for convex ones. 

Let us describe the weak learners set $\mathcal{H}$ used by CatBoost.
For each numerical feature, CatBoost chooses between a finite number of splits $\mathbb{1}_{\{x_i \le c_{ij}\}}$, where $\{c_{ij}\}_{j=1}^{d_i}$ are some constants typically picked as quantiles of $x_i$ estimated on $\mathcal{D}_N$ and $d_i$ is bounded by a hyperparameter $\textit{border-count}$.
So, the set of weak learners $\mathcal{H}$ consists of all non-trivial binary oblivious trees with splits $\mathbb{1}_{\{x_i \le c_{ij}\}}$ and with depth bounded by a hyperparameter $\textit{depth}$. 
This set is finite, $|S| < \infty$.
We take $\theta^s \in \mathbb{R}^{m_s}$ as a vector of leaf values of the obtained tree. 

Now we are going to describe $p(s|g)$. Assume that we are given a vector $g \in \mathbb{R}^N$ and already built a tree up to a depth $j$ with remaining (not used) binary candidate splits $b_1, \ldots b_p$. Each split, being added to the currently built tree,  divides the vector $g$ into components $g_1\in \mathbb{R}^{N_1}, \ldots, g_{k} \in \mathbb{R}^{N_k}$, where $k = 2^{j+1}$. To decide which split $b_l$ to apply, CatBoost calculates the following statistics:
$$s_l := \sqrt{\sum_{i=1}^k\text{Var}(g_i) },$$
where $\text{Var}(\cdot)$ is the variance of components from the component-wise mean. Denote also $\sigma := \sqrt{\text{Var}(g)}$. Then, CatBoost evaluates:
$$s_l' := \mathcal{N}\left(s_l,\left(\frac{\rho \sigma}{1 + N^{\epsilon \tau}}\right)^2\right),$$
where $\rho \ge 0$ is a hyperparameter defined by the $\textit{random-strength}$ parameter. After obtaining $s_l'$, CatBoost selects the split with a highest $s_l'$ value and adds it to the tree. Then, it proceeds recursively until a stopping criteria is met.

Since $\epsilon\tau \rightarrow \infty$, we can assume that the variance of $s_l'$ equals zero in the limit. Thus, the stationarity of sampling is preserved. So, $p(s|g)$ is fully specified, and one can show that it satisfies all the requirements. Henceforth, such CatBoost implementation $\mathcal{B}$ must converge globally for a large class of losses as $\epsilon \rightarrow 0_+, \epsilon \tau \rightarrow \infty$.

\begin{table}[t]
\centering
\caption{Datasets description}
\label{tab:datasets}
\begin{tabular}{lcc}
\toprule
Dataset &  \# Examples & \# Features \\ 
\midrule
Appetency\protect\footnotemark & 50000 & 231 \\
\addtocounter{footnote}{-1}
Churn\protect\footnotemark & 50000 & 231 \\
\addtocounter{footnote}{-1}
Upselling\protect\footnotemark & 50000 & 231 \\
Adult\protect\footnotemark        & 48842       & 15 \\  
Amazon\protect\footnotemark &  32769 & 9 \\
Click\protect\footnotemark & 399482 & 12 \\
Epsilon\protect\footnotemark & 500K & 2000 \\
Higgs\protect\footnotemark & 11M & 28 \\
Internet\protect\footnotemark & 10108 & 69 \\
Kick\protect\footnotemark & 72983 & 36 \\
\bottomrule
\end{tabular}
\end{table}

\addtocounter{footnote}{-7}

\footnotetext{\url{https://www.kdd.org/kdd-cup/view/kdd-cup-2009/Data}}
\footnotetext{\url{https://archive.ics.uci.edu/ml/datasets/Adult}}
\footnotetext{\url{https://www.kaggle.com/bittlingmayer/amazonreviews}}
\footnotetext{\url{https://www.kdd.org/kdd-cup/view/kdd-cup-2012-track-2}}
\footnotetext{\url{https://www.csie.ntu.edu.tw/~cjlin/libsvmtools/datasets/binary.html##epsilon}}
\footnotetext{\url{https://archive.ics.uci.edu/ml/datasets/HIGGS}}
\footnotetext{\url{https://www.kdd.org/kdd-cup/view/kdd-cup-2012-track-2}}
\footnotetext{\url{https://www.kaggle.com/c/DontGetKicked}}

\section{Experimental Setup}\label{app:C}

\subsection{Dataset Description}\label{app:C1}

The datasets are listed in Table~\ref{tab:datasets}.

\subsection{Parameter Tuning}\label{app:C2}

For all algorithms, we use the default value 64 for the parameter \textit{border-count} and the default value 0 for \textit{random-strength} ($\rho \ge 0$).

For SGB, we tune \textit{learning-rate} ($\epsilon > 0$), \textit{depth} (the maximal tree depth), and the regularization parameter \textit{l2-leaf-reg}. Moreover, we set \textit{bootstrap-type=Bernoulli}.

For SGLB, we tune \textit{learning-rate}, \textit{depth}, \textit{model-shrink-rate} ($\gamma \ge 0$), and \textit{diffusion-temperature} ($\beta > 0$). 

For all methods, we set \textit{leaf-estimation-method=Gradient} as our main purpose is to compare first order optimization, and use the option \textit{use-best-model=True}.

For tuning, we use the random search (200 samples) with the following distributions:
\begin{itemize}
    \item For \textit{learning-rate} log-uniform distribution over $[10^{-5}, 1]$.
    \item For \textit{l2-leaf-reg} log-uniform distribution over $[10^{-1}, 10^1]$ for SGB and \textit{l2-leaf-reg=0} for SGLB.
    \item For \textit{depth} uniform distribution over $\{6, 7, 8, 9, 10\}$.
    \item For \textit{subsample} uniform distribution over $[0, 1]$.
    \item For \textit{model-shrink-rate} log-uniform distribution over $[10^{-5}, 10^{-2}]$ for SGLB.
    \item For \textit{diffusion-temperature} log-uniform distribution over $[10^2, 10^5]$ for SGLB.
\end{itemize}

\begin{table*}[h]
\caption{Notation used throughout the paper}
\label{tab:notation}
\vskip 0.15in
\begin{center}
\begin{tabular}{cl}
\toprule
Variable & Description  \\
\midrule
$x \in \mathcal{X}$ & Features, typically from $\mathbb{R}^k$ \\
$y \in \mathcal{Y}$ & Target, typically from $\mathbb{R}$ or $\{0, 1\}$ \\
$z \in \mathcal{Z}$ & Prediction, typically from $\mathbb{R}$ \\
$\mathcal{D}$ & Data distribution over $\mathcal{X}\times\mathcal{Y}$\\
$\mathcal{D}_N=\{(x_i, y_i)\}_{i=1}^N$ & I.i.d. samples from $\mathcal{D}$  \\
$L(z,y):\mathcal{Z}\times\mathcal{Y}\rightarrow\mathbb{R}$ & Loss function \\
$\mathcal{L}(f|\mathcal{D})$ & Expected loss w.r.t. $\mathcal{D}$\\
$\mathcal{L}_N(f)$ & Empirical loss \\
$\mathcal{L}_N({F},\gamma)$ & Regularized or implicitly regularized loss\\
$\mathcal{H}$ & Set of weak learners \\
$h^s(x,\theta^s) \in \mathcal{H}$
& Weak learner parameterized by $\theta^s$ \\
$H_{s}: \mathbb{R}^{m_s} \to \mathbb{R}^N$ & Linear operator converting $\theta^{s}$ to $(h^{s}(x_i,\theta^{s}))_{i=1}^N$ \\
$\Theta\in \mathbb{R}^m$ & Ensemble parameters \\
$f_{\Theta}(x):\mathcal{X} \rightarrow \mathcal{Z} $ & Model parametrized by $\Theta\in \mathbb{R}^m$\\
$\tau\in \mathbb{Z}_+$ & Discrete time\\
$t\in [0, \infty)$ & Continuous time\\
$\hat{F}_{\tau}$ & Predictions' Markov Chain $\big(f_{\widehat{\Theta}_\tau}(x_i)\big)_{i=1}^N$ \\ 
$F(t)$ & Markov process $\big(f_{\Theta(t)}(x_i)\big)_{i=1}^N$\\
$V_{\mathcal{B}} \subset \mathbb{R}^N$ & Subspace of predictions of all possible ensembles \\
$p(s|g)$ &  Probability distribution over weak learners’ indices \\
$\Phi_s: \mathbb{R}^N \rightarrow \mathbb{R}^{m_s}$ & Weak learner parameters estimator \\
$P_{s} := H_{s} \Phi_{s}$ & Orthoprojector \\
$P_{\infty} = N\mathbb{E}_{s\sim p(s|\mathbb{0}_N)}P_{s}$ & Implicit limiting preconditioner matrix of the boosting \\
$P = P_{\infty}$ & Symmetric preconditioner matrix \\
$\Gamma= \sqrt{P^{-1}}$ & Regularization matrix \\
$\delta_\Gamma(\gamma)$ & Error from the regularization \\
$p_\beta(\Theta)$ & Limiting distribution of $\widehat{\Theta}_\tau$ \\
$\lambda_*$ & Uniform spectral gap parameter \\
$\epsilon > 0$ & Learning rate \\
$\beta > 0$ & Inverse diffusion temperature \\
$\gamma > 0$ & Regularization parameter \\
$I_m \in \mathbb{R}^{m\times m}$ & Identity matrix \\
$\mathbb{0}_m \in \mathbb{R}^{m}$ & Zero vector \\
$W(t)$ & Standard Wiener process\\
$\phi(x):\mathcal{X}\rightarrow \mathbb{R}^m$ & Feature map, s.t. $f_\Theta(x) = \langle\phi(x), \Theta\rangle_2$ \\
$\Psi := \big[\phi(x_1),\ldots, \phi(x_N)\big]^T \in \mathbb{R}^{N\times m}$ & Design matrix \\
\bottomrule
\end{tabular}
\end{center}
\vskip -0.1in
\end{table*}

\end{document}